\pdfoutput=1
\documentclass{article}
\usepackage{iclr2027_conference,times}
\usepackage{amsmath,amsfonts,bm}
\usepackage{amsmath,amssymb,amsthm,booktabs,graphicx,array,xcolor}
\usepackage{fix-cm,tikz}
\usetikzlibrary{arrows.meta,calc,shapes.geometric,shapes.misc,decorations.text,shadings,shadows.blur}
\usepackage{algorithm,algpseudocode}
\usepackage{hyperref,url,needspace}
\hypersetup{hidelinks}
\definecolor{chrono}{HTML}{167C80}
\definecolor{draftcolor}{HTML}{87652B}

\usepackage{glossaries}
\glsdisablehyper
\newacronym{rl}{RL}{Reinforcement Learning}
\newacronym{crl}{CRL}{Contrastive Reinforcement Learning}
\newacronym{srl}{SRL}{Survival Reinforcement Learning}
\newacronym{accrl}{AC-CRL}{Action-Chunked Contrastive Reinforcement Learning}
\newacronym{tog}{ToG}{Time at Goal}
\newcommand{\ours}{ChronoSRL}
\newcommand{\ToG}{\gls{tog}}

\newcommand{\dist}{d_\theta}
\newtheorem{proposition}{Proposition}

\title{ChronoSRL: Temporal Geometry for\protect\\Self-Supervised Reinforcement Learning}
\newcommand{\website}{\href{https://nico-bohlinger.github.io/chronosrl_website}{\textbf{\textcolor{chrono}{\nolinkurl{nico-bohlinger.github.io/chronosrl_website}}}}}
\author{Nico Bohlinger\textsuperscript{1} \& Jan Peters\textsuperscript{1,2} \\
\textsuperscript{1}Technical University of Darmstadt, Germany \\
\textsuperscript{2}Robotics Institute Germany (RIG); German Research Center for AI (DFKI); hessian.AI \\
\texttt{\{nico.bohlinger,jan.peters\}@tu-darmstadt.de}}
\providecommand*{\paperversion}{arxiv}
\usepackage{etoolbox}
\newif\ifanonymous
\ifdefstring{\paperversion}{submission}{\anonymoustrue}{\iclrfinalcopy}
\begin{document}
\maketitle
\ifdefstring{\paperversion}{arxiv}{\lhead{Preprint}}{}
\ifanonymous\else\vspace*{-10pt}\centerline{\website}\vspace*{10pt}\fi

\begin{abstract}
A goal that is close in space can be far away in time. Obstacles, terrain, and the agent's own capabilities determine how long it takes to get there.
Yet, critics in contrastive and survival reinforcement learning do not measure the distances in their representation space in units of time.
We therefore introduce \ours{}, which gives the critic's embeddings an explicit temporal geometry.
The distance between state--action and goal embeddings is trained to match the time that the agent takes to reach the goal (goal-reaching time), while goals that were not reached, and goals from other trajectories, are pushed at least one discount horizon away.
Furthermore, reaching a goal quickly once does not mean that reaching it is reliable in general, so the policy should not follow the temporal distance directly.
Instead, we build on survival reinforcement learning and predict from our temporal embeddings not only the full distribution of goal-reaching times but also the time spent near the goal.
Thereby, the policy is trained to favor actions that reach the goal sooner and more reliably and that keep the agent near it.
\ours{} learns faster and reaches higher performance than contrastive, action-chunked contrastive, and survival reinforcement learning baselines on seven standard locomotion and navigation benchmarks, even with much smaller networks.
To test the limits of self-supervised reinforcement learning, we introduce velocity tracking, goal-position reaching, and box climbing tasks with a quadruped robot in a realistic sim-to-real locomotion setup, and show how the shaping terms that are typical for robotics can be naturally incorporated into our framework.
\ours{} is the only one of the tested self-supervised reinforcement learning methods that learns to stay at the commanded velocities and goal positions, and climbs the highest boxes.
\end{abstract}

\section{Introduction}
\label{sec:intro}
When an agent interacts with its environment in \gls{rl}, it collects trajectories that might or might not end up at its intended goals.
Nevertheless, each trajectory can be used as a source of information that shows how to reach the various states visited along the way.
Self-supervised \gls{rl} relabels these states as goals, turning every rollout of the agent into training data for a rich goal-conditioned policy without the need to design a hand-crafted reward for each goal \citep{andrychowicz2017hindsight,eysenbach2022contrastive}.
This makes learning possible even when the agent struggles to reach its intended goal at the start of the learning process.

For relabeled goals to be useful, the agent must learn how to relate states, and therefore goals, to one another.
Simply relying on the physical distance between states can easily lead to local optima, as often seen in maze-like environments.
The time needed to reach a particular state provides a more accurate measure of its distance, capturing the dynamics of the environment, like obstacles, and the agent's capabilities.
Naturally, every state in a trajectory provides information about how long it takes to reach any other subsequent state, simply by its time index in the trajectory.
A representation of states that encodes this temporal distance can then allow the agent to predict how long it will take to reach different goals from any given state.

We introduce \ours{} to make the temporal distance an explicit part of the representation space in self-supervised goal-conditioned \gls{rl}.
Building on temporal distance learning \citep{hartikainen2020dynamical,mendonca2021discovering,bohlinger2023intrinsically}, we train the distance between state--action and goal embeddings on the time taken to reach the goal (goal-reaching time).
Short goal-reaching times give small target distances in representation space, and long times give larger ones.
We put these targets on a shared scale by measuring time in units of the discount horizon.
For a measured goal-reaching time $\tau$ and discount factor $\gamma$, the target distance between the embeddings is $-\tau\log\gamma$, such that the objectives for representation learning and control use the same time scale.
Goals that were not reached and states from other trajectories can also be used to gain even more supervision signal, by treating them as at least one full discount horizon away.

\begin{figure}[t]
\centering
\resizebox{\linewidth}{!}{%
\begingroup%
\definecolor{cBlue}{HTML}{52699A}%
\definecolor{cTeal}{HTML}{217B7B}%
\definecolor{cGold}{HTML}{A2722D}%
\definecolor{cRed}{HTML}{A05248}%
\definecolor{cViolet}{HTML}{7B647F}%
\definecolor{cGray}{HTML}{666666}%
\colorlet{cOcc}{cTeal}%
\colorlet{cShape}{cGold}%
\colorlet{cFrame}{black!45}%
\colorlet{cFlow}{black!66}%
\colorlet{cMarkerFill}{white}%
\colorlet{cOccPanel}{cOcc!3}%
\colorlet{cOccArea}{cOcc!13}%
\colorlet{cShapePanel}{cShape!4}%
\colorlet{cHazard}{black!36}%
\tikzset{%
 ringone/.style={},ringtwo/.style={},ringthree/.style={},%
 pathone/.style={},pathtwo/.style={},paththree/.style={},%
 embeddingpanel/.style={},valuepanel/.style={},%
 occarea/.style={fill=cOccArea,draw=none}%
}%
\tikzset{%
  base/.style={x=1cm,y=1cm,font=\rmfamily\fontsize{10}{11.5}\selectfont,text=black,%
    >={Stealth[length=4.1pt,width=3.0pt]},line cap=round,line join=round},%
  flow/.style={->,draw=cFlow,line width=.65pt},%
  supervision/.style={->,draw=cTeal,line width=.7pt,dash pattern=on 2.5pt off 1.5pt},%
  small/.style={font=\rmfamily\fontsize{9}{10.5}\selectfont},%
  tiny/.style={font=\rmfamily\fontsize{8.6}{9.8}\selectfont},%
  heading/.style={font=\rmfamily\fontsize{12}{13.5}\selectfont,anchor=west,inner sep=0pt},%
  label/.style={anchor=west,inner sep=0pt},%
  box/.style={draw=cFrame,line width=.55pt,fill=white,align=center,inner sep=3pt},%
  newbox/.style={box,draw=cOcc,line width=.85pt,fill=cOccPanel},%
  lossbox/.style={box,draw=cTeal,line width=.7pt,fill=cTeal!3,minimum height=.62cm},%
  point/.style={circle,draw=cBlue,fill=cMarkerFill,line width=.8pt,minimum size=12pt,inner sep=0pt,%
     font=\rmfamily\fontsize{8.5}{9.2}\selectfont},%
  origin/.style={circle,draw=white,fill=black,line width=.35pt,minimum size=4.7pt,inner sep=0pt}%
}%
\newcommand{\chronoOverviewTrajectory}[3][End of trajectory]{%
\begin{scope}[shift={(#2,#3)}]%
  \path[obstacle] (1.02,.34) rectangle (1.44,1.80);%
  \node[tiny,rotate=90] at (1.23,1.08) {Obstacle};%
  \draw[->,cBlue,pathone,line width=1.05pt] (.45,.38) .. controls (.05833,.76316) and (.02915,1.33481) .. (.24517,1.60729);%
  \draw[->,cTeal,pathtwo,line width=1.05pt,shorten >=5pt] (.38,1.72) .. controls (.74,2.27) and (1.22,2.33) .. (1.75,2.12);%
  \draw[->,cGold,paththree,line width=1.05pt,shorten >=5pt] (1.75,2.12) .. controls (2.64,1.92) and (2.60,.94) .. (1.83,.58);%
  \draw[-{Stealth[length=4.1pt,width=3pt]},black!45,line width=.65pt,shorten >=1pt] (1.83,.58)--(2.05,.22);%
  \node[origin,draw=none,fill=black!45] at (2.05,.22) {};%
  \node[origin] at (.45,.38) {};%
  \node[point,mark={cBlue}] at (.38,1.72) {1};%
  \node[point,mark={cTeal}] at (1.75,2.12) {2};%
  \node[point,mark={cGold}] at (1.83,.58) {3};%
  \node[small,anchor=west,inner sep=0pt] at (.36,1.14) {250};%
  \node[small,anchor=west,inner sep=0pt] at (2.11,2.33) {500 steps};%
  \node[small,anchor=west,inner sep=0pt] at (2.29,.58) {750};%
  \node[small,anchor=north,inner sep=1pt] at (.45,.24) {Start};%
  \node[tiny,anchor=north,inner sep=1pt] at (2.12,.10) {#1};%
\end{scope}%
}%
\newcommand{\chronoOverviewGeometry}[3]{%
\begin{scope}[shift={(#1,#2)}]%
  \def\chronoOverviewRadius{#3}%
  \chronoOverviewGeometrySurface%
  \draw[cBlue,ringone,line width=.8pt] (0,0) circle[radius=.25*\chronoOverviewRadius];%
  \draw[cTeal,ringtwo,line width=.8pt] (0,0) circle[radius=.50*\chronoOverviewRadius];%
  \draw[cGold,ringthree,line width=.8pt] (0,0) circle[radius=.75*\chronoOverviewRadius];%
  \draw[cGray,line width=.7pt,dash pattern=on 3pt off 2pt] (0,0) circle[radius=\chronoOverviewRadius];%
  \node[origin] at (0,0) {};%
  \foreach \ang/\pre/\tar/\col/\tag in {155/.91/.25/cBlue/1,90/.91/.50/cTeal/2,220/.34/.75/cGold/3,30/.43/1/cRed/U,-28/.65/1/cViolet/S}{%
     \coordinate (before) at (\ang:\pre*\chronoOverviewRadius);%
     \coordinate (after) at (\ang:\tar*\chronoOverviewRadius);%
     \coordinate (shaft-start-\tag) at ($(before)!2pt!(after)$);%
     \coordinate (shaft-end-\tag) at ($(after)!9.5pt!(before)$);%
     \coordinate (adjust-\tag) at ($(shaft-start-\tag)!.5!(shaft-end-\tag)$);%
     \node[circle,draw=\col!70,fill=white,line width=.55pt,minimum size=4.2pt,inner sep=0pt] at (before) {};%
     \draw[->,\col,line width=.95pt,shorten <=2pt,shorten >=5.4pt] (before)--(after);%
     \node[point,mark={\col}] at (after) {\tag};%
  }%
\end{scope}%
}%
\newcommand{\chronoOverviewLegend}[2]{%
  \node[point,mark={cRed}] at (#1,#2) {U};%
  \node[small,anchor=west] at ($(#1,#2)+(.19,0)$) {Unreached goal};%
  \node[point,mark={cViolet}] at ($(#1,#2)+(0,-.49)$) {S};%
  \node[small,anchor=west] at ($(#1,#2)+(.19,-.49)$) {State from another trajectory};%
}%
\newcommand{\chronoOverviewArrival}[3]{%
\begin{scope}[shift={(#1,#2)},xscale=#3]%
  \draw[->,black!45,line width=.45pt] (0,0)--(1.92,0);%
  \draw[black!45,line width=.45pt] (0,0)--(0,.65);%
  \foreach \xx/\hh in {.10/.04,.27/.13,.44/.34,.61/.56,.78/.47,.95/.29,1.12/.15,1.29/.07,1.46/.03,1.63/.01}{%
     \path[hazardbar] (\xx,0) rectangle +( .105,\hh);%
  }%
\end{scope}%
}%
\newcommand{\chronoOverviewOccupancy}[3]{%
\begin{scope}[shift={(#1,#2)},xscale=#3]%
  \draw[->,black!45,line width=.45pt] (0,0)--(1.92,0);%
  \draw[black!45,line width=.45pt] (0,0)--(0,.65);%
  \path[occarea] (0,0)--(0,.015)..controls(.28,.02) and (.36,.03)..(.51,.27)%
    ..controls(.68,.57) and (.83,.58)..(1.12,.57)..controls(1.42,.56)and(1.53,.53)..(1.72,.50)%
    --(1.72,0)--cycle;%
  \draw[cOcc,line width=.9pt] (0,.015)..controls(.28,.02) and (.36,.03)..(.51,.27)%
    ..controls(.68,.57) and (.83,.58)..(1.12,.57)..controls(1.42,.56)and(1.53,.53)..(1.72,.50);%
\end{scope}%
}%
\tikzset{%
 mark/.style={draw=#1},%
 obstacle/.style={fill=black!5,draw=black!28,line width=.5pt},%
 hazardbar/.style={fill=cHazard,draw=none},%
 hazardpanel/.style={},%
 sumnode/.style={}%
}%
\newcommand{\chronoOverviewGeometrySurface}{}%
\newcommand{\chronoOverviewShapingSurface}{%
 \fill[cShapePanel] (0,.35) rectangle (8.74,1.72);%
}%
\definecolor{cBlue}{HTML}{52699A}%
\definecolor{cTeal}{HTML}{217B7B}%
\definecolor{cGold}{HTML}{A2722D}%
\definecolor{cRed}{HTML}{A05248}%
\definecolor{cViolet}{HTML}{7B647F}%
\definecolor{cGray}{HTML}{666666}%
\colorlet{cOcc}{cTeal}%
\colorlet{cShape}{cGold}%
\colorlet{cOccPanel}{cOcc!3}%
\colorlet{cOccArea}{cOcc!13}%
\colorlet{cShapePanel}{cShape!4}%
\colorlet{cFrame}{black!43}%
\colorlet{cFlow}{black!63}%
\tikzset{%
 box/.append style={rounded corners=.5pt,line width=.55pt,%
   top color=white,bottom color=cGold!5,%
   blur shadow={shadow xshift=.8pt,shadow yshift=-.8pt,%
     shadow blur radius=.55pt,shadow opacity=17,shadow blur steps=8}},%
 newbox/.append style={draw=cOcc!85,line width=.7pt,%
   top color=white,bottom color=cOcc!10},%
 embeddingpanel/.style={top color=white,bottom color=cGold!7},%
 valuepanel/.style={top color=white,bottom color=cGold!7},%
 mark/.style={draw=#1,top color=white,bottom color=#1!20,%
   blur shadow={shadow xshift=.45pt,shadow yshift=-.5pt,%
     shadow blur radius=.25pt,shadow opacity=27,shadow blur steps=8}},%
 obstacle/.style={draw=black!32,line width=.55pt,%
   left color=black!2,right color=black!11,%
   blur shadow={shadow xshift=.9pt,shadow yshift=-.85pt,%
     shadow blur radius=.25pt,shadow opacity=25,shadow blur steps=6}},%
 hazardbar/.style={draw=none,top color=black!35,bottom color=black!57},%
 occarea/.style={draw=none,top color=cOcc!28,bottom color=cOcc!4},%
 sumnode/.style={top color=white,bottom color=cGold!8,%
   blur shadow={shadow xshift=.5pt,shadow yshift=-.5pt,%
     shadow blur radius=.35pt,shadow opacity=21,shadow blur steps=6}}%
}%
\renewcommand{\chronoOverviewGeometrySurface}{%
 \path[top color=white,bottom color=black!4,%
   blur shadow={shadow xshift=.8pt,shadow yshift=-.9pt,%
     shadow blur radius=1.1pt,shadow opacity=21,shadow blur steps=10}]%
   (0,0) circle[radius=\chronoOverviewRadius];%
 \path[top color=white,bottom color=cGold!10,%
   blur shadow={shadow xshift=.55pt,shadow yshift=-.65pt,%
     shadow blur radius=.6pt,shadow opacity=19,shadow blur steps=8}]%
   (0,0) circle[radius=.75*\chronoOverviewRadius];%
 \path[top color=white,bottom color=cTeal!10,%
   blur shadow={shadow xshift=.5pt,shadow yshift=-.6pt,%
     shadow blur radius=.6pt,shadow opacity=19,shadow blur steps=8}]%
   (0,0) circle[radius=.50*\chronoOverviewRadius];%
 \path[top color=white,bottom color=cBlue!12,%
   blur shadow={shadow xshift=.45pt,shadow yshift=-.55pt,%
     shadow blur radius=.5pt,shadow opacity=19,shadow blur steps=8}]%
   (0,0) circle[radius=.25*\chronoOverviewRadius];%
}%
\renewcommand{\chronoOverviewShapingSurface}{%
 \path[rounded corners=.5pt,top color=white,bottom color=cShape!9,%
   blur shadow={shadow xshift=.8pt,shadow yshift=-.8pt,%
     shadow blur radius=.55pt,shadow opacity=17,shadow blur steps=8}]%
   (0,.35) rectangle (8.74,1.72);%
}%
\begin{tikzpicture}[base]
\path[use as bounding box] (-.03,.31) rectangle (17.18,5.94);
\chronoOverviewTrajectory[End]{.03}{3.08}
\node[small,anchor=west] at (.00,5.73) {Trajectory};
\node[point,mark={cRed}] at (3.18,4.48) {U};
\node[point,mark={cViolet}] at (3.18,3.46) {S};
\node at (3.18,3.97) {$+$};
\draw[flow] (3.57,3.97)--(4.90,3.97);
\node[small] at (4.23,4.25) {Learn};
\chronoOverviewGeometry{6.91}{3.97}{1.50}
\node[inner sep=2pt] (ltime) at (5.68,5.55) {$\mathcal L_{\rm time}$};
\draw[cTeal!75,line width=.5pt,shorten >=2.4pt] (ltime.south east)--(6.37,5.18)--(adjust-2);
\node[inner sep=2pt] (lsep) at (8.12,5.55) {$\mathcal L_{\rm sep}$};
\draw[cRed!75,line width=.5pt,shorten >=2.4pt] (lsep.south)--(7.63,4.90)--(adjust-U);
\node[small] at (6.91,2.13) {Discount horizon};
\draw[black!40,line width=.45pt] (6.91,2.32)--(6.91,2.46);
\chronoOverviewLegend{.24}{2.56}
\draw[black!19,line width=.5pt] (9.06,.35)--(9.06,3.74);
\draw[black!19,line width=.5pt] (9.06,4.20)--(9.06,5.72);
\node[box,embeddingpanel,minimum width=7.04cm,minimum height=.61cm] (emb) at (13.14,4.95)
 {Embeddings $\phi(s,a)$ and $\psi(g)$};
\draw[flow] (8.61,3.97)--(9.36,3.97)--(9.36,4.95)--(emb.west);
\node[box,hazardpanel,minimum width=3.18cm,minimum height=1.58cm] (arr) at (11.20,3.24) {};
\node[tiny,anchor=south east,fill=white,inner xsep=2pt,inner ysep=1pt]
 at ($(arr.north east)+(-.12,0)$) {$\mathcal L_{\rm hazard}$};
\chronoOverviewArrival{10.09}{2.83}{1.14}
\node[tiny,anchor=east] at (12.36,2.61) {Time};
\node[tiny,anchor=east] at (10.04,3.45) {$p$};
\node[tiny] at (11.20,3.79) {Arrival probability};
\node[newbox,minimum width=3.18cm,minimum height=1.58cm] (occ) at (15.07,3.24) {};
\node[tiny,anchor=south east,fill=white,inner xsep=2pt,inner ysep=1pt]
 at ($(occ.north east)+(-.12,0)$) {$\mathcal L_{\rm occ}$};
\chronoOverviewOccupancy{13.96}{2.83}{1.14}
\node[tiny,anchor=east] at (16.24,2.61) {Time};
\node[tiny,anchor=east] at (13.91,3.45) {$p$};
\node[tiny] at (15.07,3.79) {Near-goal probability};
\draw[flow] (emb.south -| arr.north)--(arr.north);
\draw[flow] (emb.south -| occ.north)--(occ.north);
\node[circle,draw=black!55,fill=white,sumnode,minimum size=13pt,inner sep=0pt] (sum) at (13.14,1.95) {$+$};
\draw[flow] (arr.south)--(11.20,1.95)--(sum.west);
\draw[flow] (occ.south)--(15.07,1.95)--(sum.east);
\node[box,valuepanel,minimum width=3.72cm,minimum height=.54cm] (pol) at (13.14,1.06)
  {$Q=Q_{\rm hazard}+\beta Q_{\rm occ}$};
\draw[flow] (sum.south)--(pol.north);
\chronoOverviewShapingSurface
\draw[cShape!55,line width=.5pt] (0,1.72)--(8.74,1.72);
\node[anchor=west,inner sep=0pt] at (.14,1.48) {Shaping terms increase the temporal distance};
\node[small,anchor=west,inner sep=0pt] at (.14,.99) {Steps};
\draw[black!45,line width=.55pt] (1.63,.99)--(3.93,.99);
\foreach \x in {1.63,2.09,2.55,3.01,3.47,3.93}{\draw[black!50,line width=.55pt](\x,.92)--(\x,1.06);}
\node[small,anchor=west,inner sep=0pt] at (.14,.64) {+ shaping};
\draw[cShape,line width=.70pt] (1.63,.64)--(4.68,.64);
\foreach \x in {1.63,2.09,2.73,3.48,3.94,4.68}{\draw[cShape,line width=.7pt](\x,.56)--(\x,.72);}
\foreach \u/\v in {1.63/1.63,2.09/2.09,2.55/2.73,3.01/3.48,3.47/3.94,3.93/4.68}{
  \draw[cShape!35,line width=.4pt] (\u,.89)--(\v,.74);
}
\node[align=center] at (6.58,.84) {$\widetilde\tau=\textstyle\sum_{t<\tau}(1+\lambda c_t)$};
\end{tikzpicture}%
\endgroup%
}
\caption{
\ours{} learns temporal geometry from trajectories.
State 3 is physically closer to the start than state 2 but takes longer to reach.
The time loss trains embedding distances to match goal-reaching times, while the separation loss pushes unreached goals and states from other trajectories at least one discount horizon away.
The shared embeddings also predict the distribution over goal-reaching times with a hazard head and the time spent near the goal with an occupancy head.
Both heads form the value that trains the policy, and shaping terms can add per-step costs to the distance.
}
\label{fig:overview}
\end{figure}

Giving the distance between embeddings a temporal meaning does not necessarily make them sufficient for choosing actions.
A risky action can reach a goal quickly on a single successful attempt and still fail most of the time.
An agent can also learn to overshoot and reach a goal region briefly and fail to stay there consistently.
Therefore, learning must account for both the uncertainty in reaching the goal and what happens after reaching it.
Thus, we build on the idea of \gls{srl} \citep{nguimatsia2026survival}, and use our temporal embeddings to predict both the full distribution over goal-reaching times and the time spent near the goal.
The policy is trained on the value obtained from these predictions, and learns to favor actions that reach the goal sooner and more reliably and keep the agent nearby.

We evaluate \ours{} on seven standard locomotion and navigation environments \citep{bortkiewicz2024jaxgcrl} and compare it to \gls{crl} \citep{wang2025scaling}, \gls{accrl} \citep{korniak2026chunking}, and \gls{srl} \citep{nguimatsia2026survival}.
\ours{} learns faster than all three baselines across network depths and typically reaches the final performance of the strongest baseline with less than a third of the training budget.
Already with eight layers, it learns faster than any baseline with up to $64$ layers.
Beyond these standard benchmarks, we evaluate the limits of \ours{} and the baseline methods in more challenging and realistic robotics scenarios.
We build on the typical sim-to-real pipeline for legged robot locomotion and design velocity tracking, goal-position reaching, and box climbing tasks for the Unitree Go2 quadruped robot.
We show how the shaping terms that are necessary and typical for robotics can be naturally incorporated into the self-supervised \gls{rl} framework.
\ours{} is the only one of these methods that learns to stay at the commanded velocities and goal positions, and climbs the highest boxes.

\section{Related work}
\label{sec:related}

Goal-conditioned \gls{rl} trains a single policy to reach many different goals \citep{liu2022goal}.
\citet{kaelbling1993learning} already showed how each observed transition of an agent could be used to update its estimates of the expected remaining steps for multiple goals.
Afterwards, universal value function approximators were introduced, generalizing over multiple goals through a shared function approximator \citep{schaul2015universal}.
Such goal-conditioned value functions can also learn from trajectories that do not reach their intended goal, since they still reach other states.
Hindsight experience replay uses these achieved states as alternative goals for learning the value function \citep{andrychowicz2017hindsight}, while the relabeled trajectories can also be used as demonstrations for training the policy directly with supervised learning \citep{ghosh2021learning}.

The same trajectories can also be used to learn representations that predict which states the agent is likely to visit.
Successor representations describe these predictions through discounted future visitations \citep{dayan1993improving,barreto2017successor}.
One way to learn such predictions is to distinguish future states from randomly sampled goals, following the ideas of noise-contrastive estimation and contrastive predictive coding \citep{gutmann2010noise,oord2018representation}.
C-learning and \gls{crl} use this classification problem to learn goal-conditioned value functions \citep{eysenbach2021clearning,eysenbach2022contrastive}.
Temporal-difference updates extend this approach by combining information across trajectories \citep{zheng2024contrastive}, while improvements to network architectures and training make \gls{crl} effective on offline robotics data \citep{zheng2024stabilizing}.
Alongside recent progress in scaling value functions in \gls{rl} \citep{nauman2024bigger,lee2025simba,palenicek2026xqc}, \citet{wang2025scaling} show that increasing network depth allows \gls{crl} to continuously scale and improve its performance.
Building on the idea of action chunking \citep{zhao2023learning,li2026reinforcement}, \citet{korniak2026chunking} extend \gls{crl} to action sequences and improve performance even further.

While \gls{crl} uses InfoNCE to distinguish future goals from randomly sampled goals \citep{oord2018representation,eysenbach2022contrastive,wang2025scaling}, the same trajectories also provide step counts between states, which temporal-distance methods use to learn distances and construct rewards for goal reaching \citep{florensa2019self,venkattaramanujam2019selfsupervised,hartikainen2020dynamical,mendonca2021discovering,bohlinger2023intrinsically}.
Each state can be paired with later states, providing targets over many distances within a trajectory.
Such distances can depend on the direction, since reaching a state may take longer than returning from it.
Quasimetric models capture this asymmetry while satisfying the triangle inequality \citep{wang2022quasimetric,liu2023metric}.
Quasimetric \gls{rl} builds on these models with an objective for learning optimal goal-reaching distances \citep{wang2023optimal,zheng2026multistep}, and \citet{myers2024learning} show that a temporal quasimetric can also be derived from contrastive successor representations.
In offline and unsupervised goal-conditioned \gls{rl}, temporal distances can be used to structure the latent space \citep{park2024foundation,park2024metra,bae2024tldr,myers2026offline,boock2026hitting} and guide the selection of subgoals \citep{park2023hiql}.
\ours{} introduces its temporal structure through a learning objective rather than enforcing a quasimetric distance function.
We train distances between state--action and goal embeddings to match observed goal-reaching times in units of the discount horizon.
Whereas other methods use the learned distance itself as the value or reward that the policy maximizes, \ours{} uses it only to shape the representation of its critic, whose value estimate is then a combination of the predictions of the goal-reaching time and goal-region occupancy.

Goal-conditioned value functions can also be learned by predicting when a goal will be reached.
A trajectory that ends at the intended goal provides a goal-reaching time, while an attempt that ends before reaching it provides a lower bound on this time.
Survival analysis uses these measured times and lower bounds to learn a distribution over possible goal-reaching times \citep{kaplan1958nonparametric,cox1972regression}.
Building on neural survival models \citep{katzman2018deepsurv,kvamme2019time,gensheimer2019scalable,lee2018deephit}, \gls{srl} predicts this distribution for each state--action--goal triplet and derives a discounted value from it \citep{nguimatsiatiofack2026svl,nguimatsia2026survival}.
Since reaching a goal once does not ensure that the agent stays there, \citet{nguimatsia2026survival} introduce a dwell-time objective that requires the agent to remain at the goal for a consecutive period of time.
\ours{} builds on this idea by directly predicting the goal occupancy, which is the expected time spent near the goal.
However, only reaching and remaining at a goal does not say anything about how the agent got there.
When applying self-supervised \gls{rl} algorithms to robotics tasks that should be transferable to the real world, it is important to ensure that the learned policies produce smooth trajectories or are influenced by other relevant constraints, such as energy consumption.
Such reward shaping terms help the real-world transfer and can also improve the exploration and learning efficiency of the agent \citep{pmlr-v164-rudin22a,bohlinger2024onepolicy}.
We incorporate these terms as per-step costs in the temporal targets, such that both the learned geometry and the arrival and occupancy predictions account for them.

\section{Temporal geometry for self-supervised RL}
\label{sec:method}
We propose to give the critic in self-supervised \gls{rl} a representation in which distances measure time.
The core idea is shown in Figure~\ref{fig:overview}.
Instead of learning the critic's embeddings only as features for its value prediction, we train the distance between state--action and goal embeddings to match the goal-reaching times observed in the agent's own trajectories.
Building on \gls{srl} \citep{nguimatsia2026survival}, the same embeddings then predict when the goal will be reached and how much time the agent will spend near it.
The policy is trained on the value combined from both these predictions.

\subsection{Building on survival reinforcement learning}
\label{sec:temporal_supervision}
We consider goal-conditioned \gls{rl} with states $s$, actions $a$, goals $g$, and a discount factor $\gamma$, where the agent reaches a goal once the goal-relevant part of its state, such as its position, lies within a distance $\varepsilon$ of $g$, which defines the goal region $\mathcal B_\varepsilon(g)$.
Following \gls{accrl}, the policy $\pi(a\mid s,g)$ outputs a chunk of actions that is executed open-loop, and we write $a$ for such a chunk, while all times count individual environment steps.
We use chunks of two actions, as they worked best for \ours{}, compared to the three actions of \gls{accrl}.
The critic learns from random batches of training examples, drawn from the trajectories that the agent collects with this policy.
Each training example pairs a stored state $s$ and a chunk $a$ with a goal $g$, which hindsight relabeling can take from a state that the agent reached later in the same trajectory.
The time indices of the trajectory from $s$ onward then provide the goal-reaching time $\tau$, which is the number of steps until it first enters $\mathcal B_\varepsilon(g)$, with $\tau=0$ if the goal is already reached.
If the goal is not reached within an observation window of $W$ steps, we only know that $\tau\ge W$, and survival analysis calls such an example censored.

We build on \gls{srl}, which learns a goal-conditioned value from all of these examples.
Its critic encodes the state and the action chunk as an embedding $\phi_\theta(s,a)$ and the goal as an embedding $\psi_\theta(g)$, where $\theta$ denotes all parameters of the critic.
Based on both embeddings, a hazard head $h_\theta(t\mid s,a,g)$ predicts the probability of reaching the goal at step $t$, given that it has not been reached before.
The survival probability that the goal has still not been reached at step $t$ is then $S_\theta(t\mid s,a,g)=\prod_{u=0}^{t}\bigl(1-h_\theta(u\mid s,a,g)\bigr)$.
The hazard head is trained with the negative log-likelihood $\mathcal L_{\mathrm{hazard}}$ of all examples, including the censored ones (see Appendix~\ref{app:survival}).
If episodes ended at the goal, the value of an action would be the negative discounted time spent waiting for it,
\begin{equation}
Q_{\mathrm{hazard}}(s,a,g)=-\sum_{t=0}^{W-1}\gamma^t\,S_\theta(t\mid s,a,g).
\label{eq:arrival}
\end{equation}
Since, in the environments we consider, episodes do not end immediately upon reaching the goal, \gls{srl} relabels sequences of $k$ consecutive goals, such that reaching the goal requires staying near it for $k$ steps.
In this critic, nothing relates the distance between the two embeddings to the time it takes to reach the goal, although this time is exactly what the critic predicts.

\subsection{Distances in units of time}
\label{sec:temporal_geometry}
We measure the distance between the two embeddings as $\dist(s,a,g)=\|\phi_\theta(s,a)-\psi_\theta(g)\|_2$.
A goal reached after $\tau$ steps is discounted by $\gamma^\tau=e^{-\kappa\tau}$ with $\kappa=-\log\gamma$.
We use the negative log of this discount, $\kappa\tau$, as the target distance, such that a goal at distance $d$ is discounted by exactly $e^{-d}$.
The distance thereby measures time in units of the discount horizon $1/\kappa\approx1/(1-\gamma)$, which is about $1000$ steps for $\gamma=0.999$.
We fit the distances to all measured goal-reaching times with the time loss
\begin{equation}
\mathcal L_{\mathrm{time}}=\mathbb E_{(s,a,g,\tau)\sim\mathcal D_+}\Big[\rho\big(\dist(s,a,g)-\kappa\tau\big)\Big],
\label{eq:time}
\end{equation}
where $\rho$ is the Huber loss with a threshold of one discount horizon and $\mathcal D_+$ contains the examples with a measured $\tau\ge1$, since goals that are already reached are left to the hazard head.

The time loss only constrains pairs that a trajectory connects, so the encoders could still place unrelated states and goals close to each other.
Therefore, we also use the pairs without a measured goal-reaching time, marked U and S in Figure~\ref{fig:overview}, which are the censored examples and the cross-batch pairs of each state--action with the goals of all other examples in the batch, excluding goals within $\varepsilon$ of its own goal.
For these pairs, we only know, or assume, that reaching the goal takes at least $W$ steps, so we penalize only distances below $\kappa W$, with a linear version of the margin loss of contrastive metric learning \citep{hadsell2006dimensionality}, which we call the separation loss,
\begin{equation}
\mathcal L_{\mathrm{sep}}=\frac{1}{|\mathcal N|}\sum_{(i,j)\in\mathcal N}\big[\kappa W-\dist(s_i,a_i,g_j)\big]_+,
\label{eq:separation}
\end{equation}
where $[x]_+=\max(x,0)$, $\mathcal N$ contains the censored examples and the cross-batch pairs of a batch, and with $\gamma=0.999$ and $W=1000$, the margin is almost exactly one discount horizon.
Unlike the censored examples, the cross-batch pairs never enter $\mathcal L_{\mathrm{hazard}}$, since no trajectory shows that their goals are at least $W$ steps away, and some may in fact be closer (see Appendix~\ref{app:geometry_losses}).

\subsection{From temporal geometry to policy learning}
\label{sec:readout}
It is tempting to simply convert the distance to a goal into its discount, $e^{-d}$, and use it as the value that the policy maximizes.
However, the time loss only sees reached goals, so two actions that reach the same goal after the same number of steps have the same target distance, even if one succeeds far more often than the other (see Appendix~\ref{app:waiting_value} for a formal argument).
The distance also ignores what happens after the goal is reached.
Therefore, we use the temporal embeddings to predict when the goal is reached and how long the agent stays near it.

For the first prediction, we keep the hazard head of \gls{srl} on top of the temporal embeddings.
Since it is also trained on the censored examples, it captures how likely the goal is reached at all.
It receives both embeddings as input rather than only their distance, so pairs at a similar temporal distance can still differ in how reliably their goal is reached.
For the second prediction, we add an occupancy head, which predicts the probability $O_t=O_\theta(t\mid s,a,g)$ that the agent is in the goal region $t$ steps later, and train it on the occupancies $o_t$ observed in the rollout trajectories.
After an environment termination, for example a fall, the observed occupancy is zero for the rest of the window.
To reward being close to the center of the goal region, we average the occupancy over the goal regions with radii $\varepsilon$, $\varepsilon/2$, and $\varepsilon/4$ (see Appendix~\ref{app:occupancy}).
The occupancy head's loss and discounted value are
\begin{equation}
\mathcal L_{\mathrm{occ}}=-\mathbb E\Big[\sum_t o_t\log O_t+(1-o_t)\log(1-O_t)\Big]
\quad\text{and}\quad
Q_{\mathrm{occ}}(s,a,g)=\sum_{t=1}^{W-1}\gamma^t\,O_t.
\label{eq:occupancy_value}
\end{equation}
Up to normalization, $Q_{\mathrm{occ}}$ is the discounted goal occupancy that \gls{crl} learns as its value \citep{eysenbach2022contrastive}, but predicted per future step and fit directly to the observed occupancies.
Unlike $Q_{\mathrm{hazard}}$, which only depends on the first arrival, it distinguishes staying at the goal from passing through it or falling after arriving.

Both value estimates, $Q_{\mathrm{hazard}}$ and $Q_{\mathrm{occ}}$, count discounted time steps, the steps spent waiting for the goal and the steps spent near it, so the policy maximizes their weighted sum with an entropy bonus, as in Soft Actor-Critic \citep{haarnoja2018soft},
\begin{equation}
J(\pi)=\mathbb E_{(s,g)\sim\mathcal D,\,a\sim\pi(\cdot\mid s,g)}\big[Q_{\mathrm{hazard}}(s,a,g)+\beta\,Q_{\mathrm{occ}}(s,a,g)-\alpha\log\pi(a\mid s,g)\big],
\label{eq:actor}
\end{equation}
where $\mathcal D$ is the replay buffer and $\alpha$ the entropy coefficient.
The weight $\beta$ is the fraction of examples in the batch whose episode terminates within the window, so the policy follows the occupancy value less when the agent rarely falls and more when it falls more often (see Appendix~\ref{app:actor_details}).

Finally, the critic minimizes the sum $\mathcal L_{\mathrm{hazard}}+\mathcal L_{\mathrm{occ}}+\mathcal L_{\mathrm{time}}+\mathcal L_{\mathrm{sep}}$ with equal coefficients, and all four losses update the same encoders (see Algorithm~\ref{alg:chronosrl}).
The time and separation losses, which we call the geometric losses, add no parameters, and without them, our critic reduces to a survival critic as in \citet{nguimatsia2026survival}, with an occupancy head and action chunks.

\subsection{Shaping terms as additional time costs}
\label{sec:shaping}
Policies that control robots need to act smoothly and reach their goals in a way that humans prefer, which typical reward-based \gls{rl} encodes as reward shaping terms \citep{pmlr-v164-rudin22a,bohlinger2024onepolicy}.
Self-supervised \gls{rl} has no reward terms, but all targets of our critic count time steps, so we simply express these terms as additional time costs.
For a per-step cost $c_t\ge0$ and a weight $\lambda\ge0$, we replace the time $\tau$ in our temporal targets, including the censoring, by the shaped time $\widetilde\tau=\tau+\lambda\sum_{t<\tau}c_t$, so every step with a nonzero cost counts as more than one step.
Two arrivals after the same number of steps can then differ in distance and value if one accumulates more cost.
The policy objective stays unchanged, and $\lambda=0$ recovers the unshaped step counts (see Appendix~\ref{app:shaping}).

\section{Experiments}
\label{sec:experiments}
We evaluate \ours{} on the seven locomotion and navigation tasks that \citet{wang2025scaling} use in JaxGCRL \citep{bortkiewicz2024jaxgcrl}.
Four of these tasks are ant mazes of different sizes and layouts (U4-Maze, Big Maze, Hardest Maze, and U5-Maze), and in the other three, a humanoid has to walk to its goal, either in the open or through a maze (Humanoid, Humanoid U-Maze, and Humanoid Big Maze).
Our baselines are the self-supervised methods that have been scaled to these tasks.
We run \gls{crl} \citep{wang2025scaling}, \gls{accrl} \citep{korniak2026chunking}, and \gls{srl} \citep{nguimatsia2026survival} with their published hyperparameters.
Since network depth is the main axis along which these methods scale, we train all four methods at depths from $1$ to $64$, with four seeds per setting and $100$ million environment steps per run.
We note that this budget is smaller than the budgets that \citet{wang2025scaling} use for the three hardest mazes (see Appendix~\ref{app:reference_methods}).
We measure performance by the \ToG{}, which counts the steps that the agent spends in the goal region during an evaluation episode of $1000$ steps.
We summarize each learning curve by its final value and by its area (AUC), normalized by the budget (see Appendix~\ref{app:provenance}).

\subsection{\ours{} learns faster with smaller networks}
\label{sec:performance}
\begin{figure}[t]
\centering
\includegraphics[width=\linewidth]{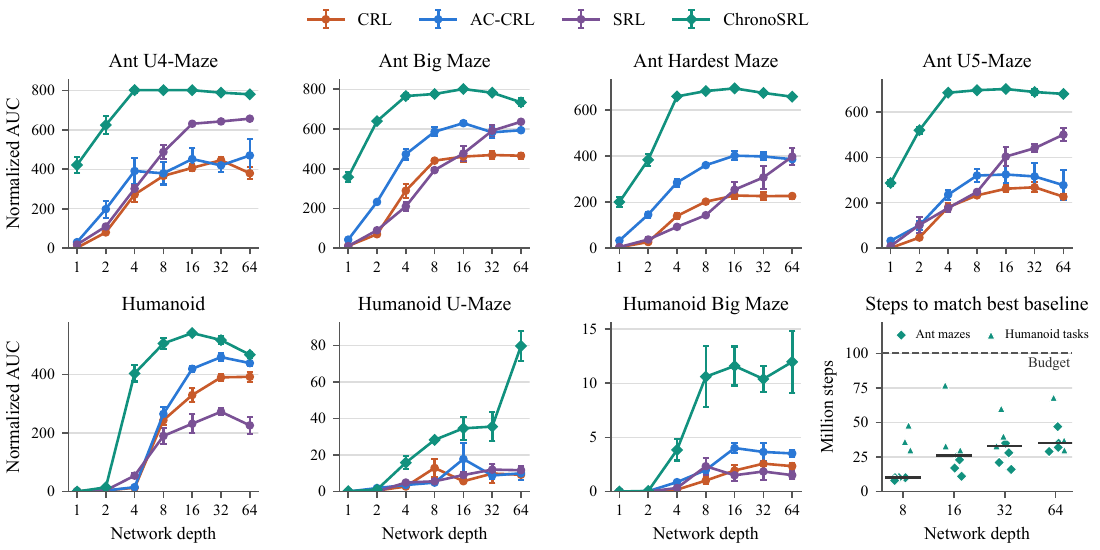}
\caption{\ours{} learns faster and achieves higher final performance than all baselines across network depths.
The first seven panels show the mean normalized AUC over four seeds with standard errors, and the last panel shows the steps that \ours{} needs to reach the final \ToG{} of the strongest baseline at the same depth.}
\label{fig:depth_auc}
\end{figure}
Figure~\ref{fig:depth_auc} shows that \ours{} learns much faster than all baselines, with the highest AUC on every task and at every depth, except for a few humanoid settings with one or two layers in which no method learns.
With eight or more layers, it needs a median of only $30$ million steps, less than a third of the budget, to reach the final performance of the strongest baseline (Figure~\ref{fig:depth_auc}, last panel).
We chose to report the AUC because it highlights this speed, but \ours{} also has the highest mean final \ToG{} in all but one of these settings, with slightly smaller margins (see Figure~\ref{fig:depth_final} in Appendix~\ref{app:full_results}).
\ours{} also needs much smaller networks, as in all four ant mazes, a critic with only four layers learns faster than every baseline at any depth up to $64$.
With the same number of environment steps, such a run takes only about a fifth of the wall-clock time of a baseline with $64$ layers on the same GPUs (see Appendix~\ref{app:reference_methods} for the run times).
In the humanoid tasks, \ours{} needs eight layers for this, and only Humanoid U-Maze still clearly benefits from $64$ layers.

\subsection{What makes \ours{} work}
\label{sec:ablations}
\begin{figure}[t]
\centering
\includegraphics[width=\linewidth]{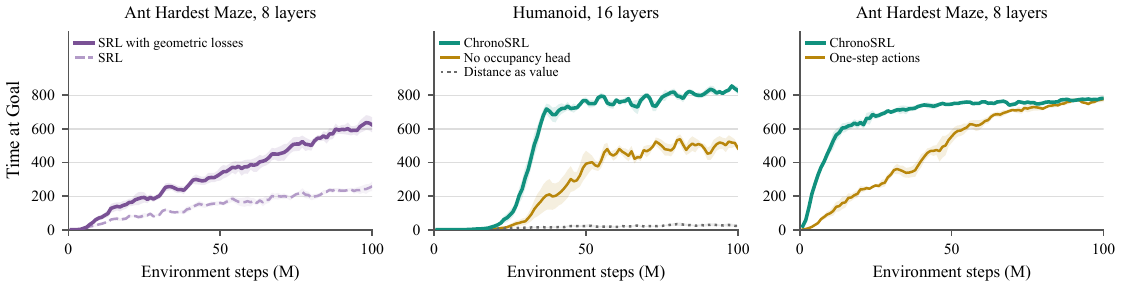}
\caption{Added to \gls{srl}, the geometric losses more than double its AUC on Ant Hardest Maze, and removing the occupancy head or the action chunks slows down \ours{}.
Without the occupancy head, the humanoid falls after reaching its goal, and a policy that follows the distance does not learn.
All lines show the smoothed mean \ToG{} over four seeds with one standard error.}
\label{fig:ablations}
\end{figure}
To see where these gains come from, we study the components of \ours{} (see Figure~\ref{fig:ablations} and Appendix~\ref{app:ablation_protocol}).
The geometric losses are not specific to our critic, as adding them to \gls{srl} speeds it up in all four ant mazes and more than doubles its AUC on Ant Hardest Maze.
Within \ours{}, they mainly speed up early learning on the hardest tasks.
The occupancy head is the major component that keeps the agent near its goal.
Without it, the humanoid often falls after reaching the goal, as it overshoots with too much momentum, and loses about $40\%$ of its final \ToG{}.
A policy that simply follows the distance instead of the predicted value does not learn at all, as the distance tells the agent neither how reliably it reaches the goal nor whether it stays there.
Finally, the action chunks significantly speed up the start of learning process, as with single actions, \ours{} needs about three times as many steps to reach $80\%$ of its final \ToG{} on Ant U4-Maze and Ant Hardest Maze.

\subsection{Inside the temporal geometry}
\label{sec:geometry}
\begin{figure}[t]
\centering
\includegraphics[width=\linewidth]{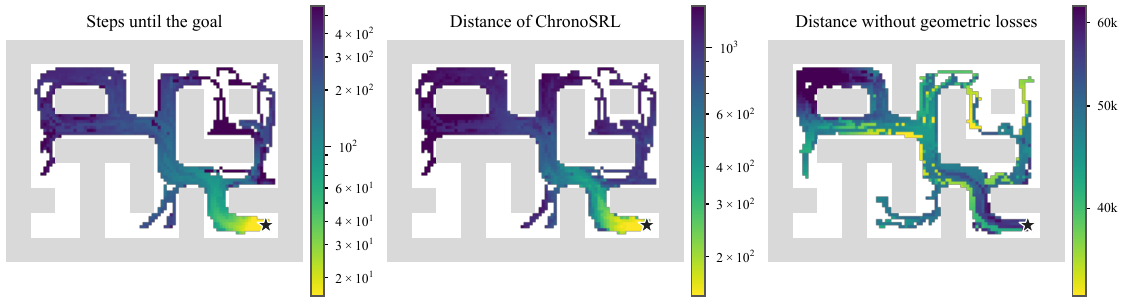}
\caption{The distance of \ours{} follows the time that the agent needs to reach the goal.
For a goal (star) in the far right corner of Ant Hardest Maze with $8$ layers, the maps show the steps that the final policies still need from each position, the critic's distance to the goal converted to steps, and the same distance without the geometric losses, pooled over four seeds, each on its own color scale.}
\label{fig:geometry}
\end{figure}
In mazes, the distance between two states in space is a poor proxy to the actual time that reaching a goal takes.
To see whether the critic of \ours{} measures the goal-reaching time properly, we start the ant at every free cell of a maze, let the final policy walk to the same goal, and, along the episodes that reach it, convert the critic's distance to the goal into steps by dividing it by $\kappa$ (see Figure~\ref{fig:geometry}).
In Ant Hardest Maze, the distance of \ours{} follows the steps that the agent actually needs, with a strong rank correlation of $0.93$, while without the geometric losses, the distances are at tens of thousands of steps and, across the maze, are not even clearly positively related to the remaining steps.
The difference between the distances in space and time is the largest in Ant U4-Maze, where the goal lies behind a wall, so that positions close to it in space can be far from it in time, like the state 3 in Figure~\ref{fig:overview}.
There, the distance of \ours{} follows the steps that the agent still needs, while without the geometric losses, it partly follows the straight-line distance instead (see Appendix~\ref{app:measurements} for more details).
Both geometric losses are needed for this, as without the time loss, the distances are unrelated to the remaining steps, and without the separation loss, they follow the straight-line distance instead.
The separation loss also stretches the scale of the distances, which can lead to overestimations of the remaining steps (see Appendix~\ref{app:measurements} for the maps of all variants).
For control, this overestimate does not matter, as the policy does not act on the distance but on the predicted hazard and occupancy values, which matches the discounted waiting time that the agent actually experiences along evaluation episodes, with a rank correlation of $0.99$ in both ant mazes.

\subsection{Getting to the goal and staying there}
\label{sec:behavior}
\begin{figure}[t]
\centering
\includegraphics[width=\linewidth]{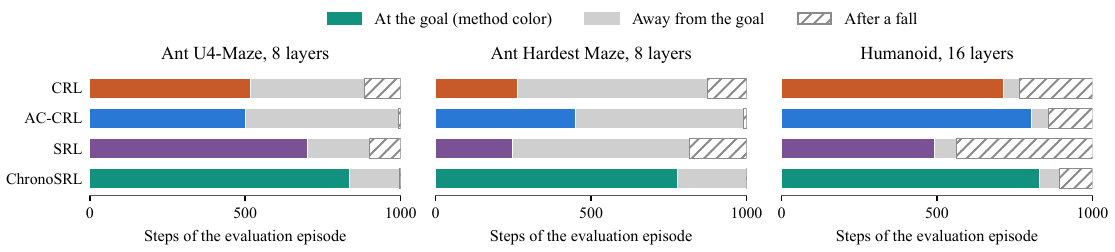}
\caption{\ours{} spends most of each evaluation episode at the goal.
Bars split the $1000$ steps into the time at the goal (colored), away from it (gray), and after a fall (hatched) for the final policies.}
\label{fig:behavior}
\end{figure}
To see how \ours{} achieves its higher \ToG{}, we split each evaluation episode into the time at the goal, the time away from it, and the time lost after a fall.
Figure~\ref{fig:behavior} shows that in all four ant mazes, \ours{} with eight layers reaches the goal in more than $97\%$ of the evaluation episodes, against $34$ to $92\%$ for the baselines (see Table~\ref{tab:reach} in Appendix~\ref{app:full_results} for more details), and then stays there, while the baselines spend far more time away from the goal, and \gls{srl} and \gls{crl} regularly flip over and fall.
In the Humanoid environment, every method reaches the goal in more than $90\%$ of the episodes, but afterwards \gls{srl} loses more than $40\%$ of the episode because the humanoid falls, while \ours{} keeps the humanoid upright near the goal, which is exactly what our occupancy head encourages.

\subsection{Toward self-supervised robot locomotion}
\label{sec:robotics}
Beyond the standard benchmarks, we evaluate self-supervised \gls{rl} on more realistic robotic tasks and study whether these methods can learn the behaviors that legged robots need for real world deployment, with three tasks for the Unitree Go2 quadruped, implemented in the RL-X framework \citep{bohlinger2023rlx}, including the necessary domain randomization, shaping terms, and learning curricula.
In velocity tracking, the goal is a command for the forward, lateral, and yaw velocity of the robot's base, in goal-position reaching, it is a position of up to five meters away together with a heading angle, and in box climbing, the goal is on top and in the center of a box whose height grows with a performance-based curriculum \citep{magnus2025investigating}.
All methods add their actions to a trot-shaped gait prior, which we found necessary to speed up learning,
while the rest of the action and observation space follow the standard pipeline in legged robot locomotion \citep{pmlr-v164-rudin22a}.
We scale up training to $4096$ parallel environments for $500$ million steps, and both survival critics, \gls{srl} and \ours{}, receive the same shaping terms of the pipeline as additional time (as described in Section~\ref{sec:shaping}), which the contrastive critics cannot use.

Figure~\ref{fig:robotics} shows that, from four layers on, \ours{} is the only method that learns to stay at its goal for most of the episode in velocity tracking and goal-position reaching, and at the end of training, it tracks the forward command with an error of about $0.1$ to $0.2$ meters per second, against $0.3$ to $0.6$ for the baselines (more details in Appendix~\ref{app:robots}).
In velocity tracking, the episodes of \gls{crl} and \gls{accrl} end within about $40$ steps, as they just learn to fall into the direction of the commanded velocity.
On box climbing, from eight layers on, \ours{} gets to the highest boxes during training, and its curriculum is still rising, while those of \gls{crl} and \gls{accrl} peak early and then get worse or stagnate.
These results are a first step toward self-supervised \gls{rl} for real robotics tasks, but they also show where the current limitations are.
Without an additional cost for walking forward, \ours{} learns to reach goal positions by walking backward, and since we found that this cost keeps the box-climbing policies below a few centimeters, we remove it for that task and the policies climb the box backward.
The tracking error is also not yet at the level of a reward-based PPO \citep{schulman2017proximal}, which reaches a tracking error of only $0.07$ meters per second, and in further tests with the Unitree G1 humanoid, the policies did not learn to walk stably at all.
Also the trot-shaped gait prior, while not harming sim-to-real transfer, was necessary to get the learning going in the first place.
These are all issues that current reward-based methods overcome easily in practice, which opens up clear directions for future research on contrastive and survival-based self-supervised \gls{rl} methods.

\begin{figure}[t]
\centering
\includegraphics[width=\linewidth]{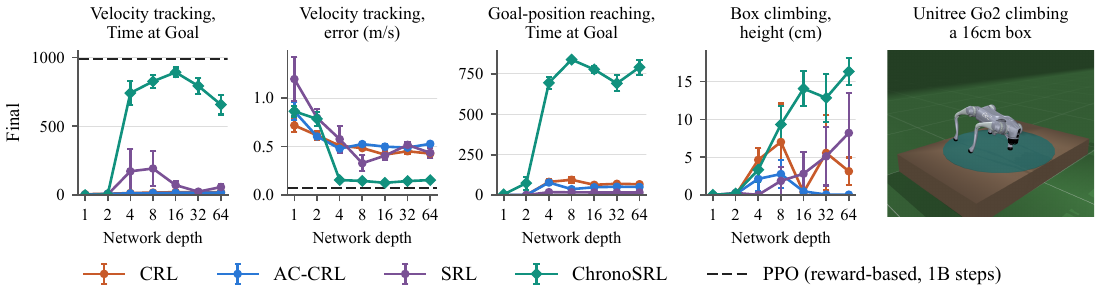}
\caption{\ours{} learns velocity tracking and goal-position reaching with the Go2 quadruped and, from eight layers on, ends training with the highest box.
The panels show the final \ToG{} out of $1000$ steps, tracking error, and curriculum box height as mean over four seeds with one standard error.
We add a PPO baseline trained for 1B steps on the velocity tracking task for comparison.
The image shows a $64$-layer \ours{} policy climbing a $16$cm box, with the goal-region in cyan.}
\label{fig:robotics}
\end{figure}

\section{Discussion and limitations}
\label{sec:limits}
Since the goal-reaching times come from the agent's own behavior, the geometry of \ours{} describes what the agent can currently do rather than the shortest paths that are possible.
This makes it a natural basis for planning with subgoals that the agent can actually reach, which we leave for future work.
Among the components of \ours{}, the geometric losses speed up learning most on the hardest tasks, and the survival critic that we build around them, with our occupancy head and action chunks, accounts for most of the gain over \gls{srl}.
Predicting how long the agent stays at the goal, and not only when it arrives, proved important on Humanoid and the quadruped, and we expect it to matter for many practical tasks that reward staying rather than arriving, such as holding a pose or a velocity.

Our temporal supervision relies on trajectories that make progress toward their relabeled goals.
Such progress is rare in the arm manipulation tasks of JaxGCRL, where the goal is the position of an object that rarely moves, so both survival critics, \ours{} and \gls{srl}, fail there, unlike the contrastive critics (see Appendix~\ref{app:boundary} for the arm results).
Here, goals that also contain the position of the gripper could make progress visible, since the gripper moves toward the object long before the object itself moves.
The same temporal geometry could also guide exploration by proposing goals at the edge of what the agent can currently reach \citep{pitis2020maximum}.
Beyond the observation window, the critic only knows that a goal is far away, so tasks with longer horizons may benefit or even strictly require bootstrapping goal-reaching times beyond $W$ steps.

\Needspace{11\baselineskip}
\section{Conclusion}
We introduced \ours{}, a self-supervised \gls{rl} method whose critic represents goal-reaching time as distance.
It trains these distances to match the times that every trajectory provides for free, and it predicts from the same embeddings when the goal will be reached and how long the agent will stay there.
On seven locomotion and navigation benchmarks, \ours{} typically reaches the final performance of contrastive, action-chunked contrastive, and survival baselines in less than a third of their training budget, and even with much smaller networks, it learns faster than baselines with up to $64$ layers.
On realistic and challenging quadruped locomotion tasks, it is the only one of these methods that learns to stay at commanded velocities and goal positions, and climbs the highest boxes.
\ours{} shows that the time at which a goal is reached is not only what an agent should minimize, but also a signal that should directly shape the representation it learns.

\label{maintext:end}

\subsection*{Reproducibility statement}
Section~\ref{sec:method} and Appendix~\ref{app:objective_details} describe the full method, Algorithm~\ref{alg:chronosrl} summarizes the training procedure, and Table~\ref{tab:configuration} lists all hyperparameters.
Appendix~\ref{app:implementation} specifies the network architectures, the training setup, and the configurations of the baselines, and Appendix~\ref{app:provenance} defines the evaluation protocol, including seeds and aggregation.
Appendix~\ref{app:robots} specifies the robot tasks and their costs, and Appendix~\ref{app:theory} contains the derivations behind our temporal targets.
\ifanonymous
All results use four seeds per setting, and we will release the code for all methods, environments, and experiments upon acceptance.
\else
All results use four seeds per setting, and the code for all methods and tasks is available through the link on our project website, \website.
\fi

\subsection*{Use of large language models}
We used large language models, including OpenAI Codex and Anthropic's Claude, as research and writing assistants.
They helped develop and critique ideas, start experiments, process and analyze experiment logs and draft the figures.
The authors reviewed all content and take full responsibility for the paper, including all material produced with the help of language models.

\ifanonymous\else
\subsection*{Acknowledgments}
This project was funded by National Science Centre Poland in the Weave programme UMO-2021/43/I/ST6/02711, and by the German Science Foundation (DFG) under grant number PE 2315/17-1.
We acknowledge EuroHPC Joint Undertaking for awarding us access to LUMI at CSC, Finland, MeluXina at LuxProvide, Luxembourg, Vega at IZUM, Slovenia, and Arrhenius at NAISS, Sweden, through the Benchmark Access project EHPC-BEN-2026B08-075 for LUMI and the Development Access project EHPC-DEV-2026D08-135 for the other three systems.
LUMI is hosted by CSC (Finland) and the LUMI consortium.
The authors gratefully acknowledge the HPC RIVR consortium and EuroHPC JU for funding this research by providing computing resources of the HPC system Vega at the Institute of Information Science.
Some of the experiments were performed on the Luxembourg national supercomputer MeluXina, and the authors gratefully acknowledge the LuxProvide teams for their expert support.
Computational resources were provided by the National Academic Infrastructure for Supercomputing in Sweden (NAISS), funded by the Swedish Research Council.
The authors gratefully acknowledge the computing time granted by the KISSKI project.
The calculations for this research were conducted with computing resources under the project kisski-merl.
The authors gratefully acknowledge the computing time provided to them on the high-performance computer Lichtenberg II at the NHR Center NHR4CES at TU Darmstadt (project number p0028682).
This is funded by the Federal Ministry of Research, Technology and Space, and the state governments participating on the basis of the resolutions of the GWK for national high performance computing at universities.
Some of the experiments ran on the hessian.AI cluster 43, and we gratefully acknowledge support from the hessian.AI Service Center (funded by the Federal Ministry of Research, Technology and Space, BMFTR, grant no.~16IS22091) and the hessian.AI Innovation Lab (funded by the Hessian Ministry for Digital Strategy and Innovation, grant no.~S-DIW04/0013/003).
\fi

\bibliography{references}
\bibliographystyle{iclr2027_conference}

\clearpage
\appendix

\section{Temporal distance and action value}
\label{app:theory}
\label{app:waiting_value}
Here, we explain why \ours{} does not use its temporal distances directly as action values.
Throughout, we fix a state, an action chunk, a goal, and the policy that acts after the chunk, and we omit these fixed arguments by writing $T\in\{0,1,\ldots\}\cup\{\infty\}$ for the first goal-reaching time.

We start from the hazard value of Equation~\ref{eq:arrival}, which counts the discounted steps spent waiting for the goal within the window.
For every $T$, this number is
\begin{equation}
\sum_{t=0}^{W-1}\gamma^t\,\mathbf 1\{T>t\}=\frac{1-\gamma^{\min(T,W)}}{1-\gamma}.
\label{eq:waiting_identity}
\end{equation}
Taking the expectation under the true survival function $S(t)=\Pr(T>t)$ then turns the hazard value into $Q_{\mathrm{hazard}}=-(1-V)/(1-\gamma)$ with $V=\mathbb E[\gamma^{\min(T,W)}]$.
This is the finite-window form of the value identity of \citet[Proposition~4.1]{nguimatsiatiofack2026svl} used by \gls{srl}, and because the sum is finite, the identity also holds when the goal may never be reached.
Since the value thus increases with $V$, the following proposition takes a closer look at $V$ and splits it into the probability of reaching the goal within the window and the time it takes when the goal is reached.

\begin{proposition}[Probability and time of goal reaching]
\label{prop:bound}
Let $0<\gamma<1$, $\kappa=-\log\gamma$, $p=\Pr(T\le W)>0$, and $\mu=\mathbb E[T\mid T\le W]$. Then
\begin{equation}
V=p\,\mathbb E\big[e^{-\kappa T}\mid T\le W\big]+(1-p)\,e^{-\kappa W}
\;\ge\;p\,e^{-\kappa\mu}+(1-p)\,e^{-\kappa W},
\label{eq:bound}
\end{equation}
with equality if and only if $T$ is almost surely constant given $T\le W$.
\end{proposition}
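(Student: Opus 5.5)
The plan is to prove the identity for $V$ by conditioning on whether the goal is reached within the window, and then obtain the inequality from Jensen's inequality for the strictly convex function $x\mapsto e^{-\kappa x}$.

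\emph{The identity.} By definition $V=\mathbb E[\gamma^{\min(T,W)}]$ and $\gamma^x=e^{-\kappa x}$. Split on the event $\{T\le W\}$, which has probability $p>0$, and on its complement $\{T>W\}$, which includes $T=\infty$.
\begin{itemize}
\item On $\{T\le W\}$ we have $\min(T,W)=T$, so this event contributes $p\,\mathbb E[e^{-\kappa T}\mid T\le W]$.
\item On $\{T>W\}$ the minimum equals $W$, so this event contributes $(1-p)e^{-\kappa W}$.
\end{itemize}
The total expectation formula applies because $\gamma^{\min(T,W)}\in(0,1]$ is bounded. If $p=1$, the second term is simply zero.

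\emph{The inequality.} Conditional on $T\le W$, the variable $T$ takes values in $\{0,\dots,W\}$. It is therefore bounded and has finite mean $\mu\in[0,W]$. The function $f(x)=e^{-\kappa x}$ is strictly convex, since $f''=\kappa^2e^{-\kappa x}>0$ because $\kappa>0$ follows from $0<\gamma<1$. Jensen's inequality under the conditional law then gives
\[
\mathbb E[e^{-\kappa T}\mid T\le W]\ge e^{-\kappa\mu}.
\]
Multiply by $p>0$ and add the unchanged term $(1-p)e^{-\kappa W}$ to obtain Equation~\ref{eq:bound}.

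\emph{Equality.} This is the only delicate point, and it reduces to the equality case of Jensen for a strictly convex function.
\begin{itemize}
\item If $T$ is almost surely constant given $T\le W$, that constant is $\mu$, and both sides coincide.
\item Conversely, take the tangent line $\ell(x)=f(\mu)+f'(\mu)(x-\mu)$. Strict convexity gives $f(x)-\ell(x)\ge0$, with equality only at $x=\mu$. Since $\mathbb E[\ell(T)\mid T\le W]=f(\mu)$, equality in the bound forces $\mathbb E[f(T)-\ell(T)\mid T\le W]=0$. This expectation is of a nonnegative variable, so $f(T)=\ell(T)$ almost surely on $\{T\le W\}$, and hence $T=\mu$ almost surely there.
\end{itemize}
Because $p>0$, dividing by $p$ is legitimate, so equality in Equation~\ref{eq:bound} is equivalent to equality in the conditional Jensen step.
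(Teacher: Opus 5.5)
Your proposal is correct and follows essentially the same route as the paper: you split $\mathbb E[\gamma^{\min(T,W)}]$ over $\{T\le W\}$ and its complement, apply Jensen's inequality to the strictly convex $t\mapsto e^{-\kappa t}$ under the conditional law, and multiply by $p>0$. The only difference is that you prove the Jensen equality case explicitly with the tangent-line argument, whereas the paper cites it as the standard equality condition.
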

\begin{proof}
Since $\min(T,W)=T$ on the event $T\le W$ and $\min(T,W)=W$ on the event $T>W$, splitting the expectation over these two events gives the equality.
For the inequality, note that the function $t\mapsto e^{-\kappa t}$ is strictly convex because $\kappa>0$.
Jensen's inequality therefore gives $\mathbb E[e^{-\kappa T}\mid T\le W]\ge e^{-\kappa\mu}$, with equality if and only if $T$ is almost surely constant given $T\le W$, and multiplying by $p>0$ yields the bound together with its equality condition.
\end{proof}

The time loss of Equation~\ref{eq:time}, in contrast, captures only the second of these two quantities, because it uses only the examples in which the goal was reached.
For a fixed input and an unrestricted prediction, its minimizer therefore depends on the goal-reaching times of these examples but not on the probability $p$ of reaching the goal.
If, in addition, all residuals lie within one discount horizon, where the Huber loss is quadratic, this minimizer is simply the mean of $\kappa T$ over the reached examples.

To make this difference between the distance and the value concrete, we consider two actions that reach the goal after exactly $200$ steps whenever they reach it, where the first action reaches the goal in $90\%$ of the attempts and the second in only $10\%$.
Both actions thus have the same target distance $\kappa\cdot200\approx0.2$, yet their values differ.
At $\gamma=0.999$ and $W=1000$, Equation~\ref{eq:bound} holds with equality and gives $V\approx0.77$ for the first action and $V\approx0.41$ for the second, which correspond to $Q_{\mathrm{hazard}}\approx-226$ and $Q_{\mathrm{hazard}}\approx-587$ discounted steps of waiting.
A value computed from the distance alone, such as $e^{-d}$, nevertheless ranks both actions equally.
Even for certain arrivals, such a value recovers $V$ only when the goal-reaching time does not vary, since the inequality in Equation~\ref{eq:bound} is strict otherwise.
The hazard head relies on neither assumption.
Instead, it models the distribution of $T$ within the window, and its likelihood also learns from the censored examples, which carry the information about $p$.

\section{Labels, losses, and prediction heads}
\label{app:objective_details}
Algorithm~\ref{alg:chronosrl} summarizes the training of \ours{}, and the subsections below describe the labels, losses, and prediction heads it relies on.

\begin{algorithm}[ht]
\caption{\ours{}}
\label{alg:chronosrl}
\begin{algorithmic}[1]
\State Initialize the critic, the policy $\pi$, the entropy coefficient $\alpha$, and the replay buffer
\For{each training step}
\State Collect $62$ steps in every parallel environment with $\pi$, executing chunks of two actions, and store the transitions
\State Sample one window of $W$ consecutive transitions per environment from the replay buffer
\State Relabel goals and compute the labels of every example (see Appendix~\ref{app:labels})
\For{each minibatch of relabeled examples}
\State Update the critic on $\mathcal L_{\mathrm{hazard}}+\mathcal L_{\mathrm{occ}}+\mathcal L_{\mathrm{time}}+\mathcal L_{\mathrm{sep}}$ (Section~\ref{sec:readout})
\State Compute $\beta$ on the minibatch and update $\pi$ on $J(\pi)$ (Equation~\ref{eq:actor}), then update $\alpha$
\EndFor
\EndFor
\end{algorithmic}
\end{algorithm}

\subsection{Relabeling and goal-reaching times}
\label{app:labels}
A window holds $W=1000$ consecutive transitions, so it can contain the end of one episode and the beginning of the next.
Within such a window, the relabeler follows \gls{srl} and gives each step $i$ a future goal with probability $0.85$, the current achieved goal with probability $0.05$, and the achieved goal of a random step of the window with probability $0.10$.
Future goals come from the later steps $j$ of the same episode with probability proportional to $\gamma_g^{\,j-i}$, where $\gamma_g=0.99$ is the goal discount of \gls{crl}, and the current goal is used whenever the episode has no later step.
Once a goal is chosen, the goal-reaching time $\tau_i$ is the first offset after step $i$ at which the episode enters $\mathcal B_\varepsilon(g_i)$, with $\tau_i=0$ if $s_i$ already lies in the goal region.
The indicator $\delta_i$ is one if this offset lies within the window, and otherwise the example is censored at the full window $W$, as in \gls{srl}.

Following \gls{srl}, the humanoid tasks replace the point goal by a sequence of $k=32$ consecutive achieved goals from the sampled step onward, and a random goal is repeated $k$ times.
A sequence is matched at a start step if the $k$ achieved goals from there lie within $\varepsilon$ of the corresponding elements, all within the same episode, and the time $\tau_i$ is then the earliest matching start relative to step $i$.
The policy, however, observes only a point goal $g$ and queries the critic with the repeated sequence $(g,\ldots,g)$.
Finally, an example is valid only if its chunk of two actions from step $i$ onward lies within the window and the episode.

\subsection{Time and separation losses on a batch}
\label{app:geometry_losses}
Both geometric losses are computed on a batch, where $D_{ij}=\dist(s_i,a_i,g_j)$ pairs the state and action of example $i$ with the goal of example $j$ and $v_i$ indicates a valid example.
The time loss then averages over the reached examples $\mathcal R=\{i\mid v_i=1,\ \delta_i=1,\ \tau_i\ge1\}$,
\begin{equation}
\mathcal L_{\mathrm{time}}=\frac{1}{\max(1,|\mathcal R|)}\sum_{i\in\mathcal R}\rho\big(D_{ii}-\kappa\tau_i\big),
\qquad
\rho(r)=\begin{cases}\tfrac12r^2,&|r|\le1,\\|r|-\tfrac12,&|r|>1.\end{cases}
\label{eq:time_batch}
\end{equation}
The separation loss, in contrast, penalizes distances below the margin $\kappa W$ and places the censored examples $\mathcal C=\{i\mid v_i=1,\ \delta_i=0\}$ and the cross-batch pairs $\mathcal M=\{(i,j)\mid i\neq j,\ \Delta(g_i,g_j)>\varepsilon\}$ under one denominator,
\begin{equation}
\mathcal L_{\mathrm{sep}}=\frac{\sum_{i\in\mathcal C}[\kappa W-D_{ii}]_++\sum_{(i,j)\in\mathcal M}[\kappa W-D_{ij}]_+}{\max\big(1,|\mathcal C|+|\mathcal M|\big)}.
\label{eq:separation_batch}
\end{equation}
Here the goal difference $\Delta$ is the Euclidean distance between point goals and, for goal sequences, the root mean square of the element-wise distances.
Because the cross-batch pairs are selected by their goal difference alone, they may contain reachable goals, which is why they never enter the hazard likelihood.
In both losses, distances are computed as $\big(\|\phi_\theta(s,a)-\psi_\theta(g)\|_2^2+10^{-12}\big)^{1/2}$.

\subsection{Hazard head}
\label{app:survival}
We use the hazard head of \gls{srl} without changes \citep{nguimatsia2026survival}.
It predicts the probability that the goal is already reached together with one hazard for every step of the window, and with one bin per step the piecewise-constant survival parameterization of \citet{nguimatsiatiofack2026svl} is exact.
The likelihood then treats the two kinds of examples differently.
A reached example contributes the log-probability of reaching the goal at step $\tau$, which for $\tau=0$ is the probability that the goal is already reached, whereas a censored example contributes the log-probability of not reaching the goal within the window.
The loss $\mathcal L_{\mathrm{hazard}}$ averages the negative log-likelihood over the valid examples, and the value of Equation~\ref{eq:arrival} likewise adds no cost for the time after $W$.

\subsection{Occupancy targets and value}
\label{app:occupancy}
The occupancy head is one additional linear output layer on the prediction network of the hazard head (see Appendix~\ref{app:architecture}).
Its targets record how close the trajectory stays to the goal over time.
For the goal $g_i$ of example $i$, or the first element of a goal sequence, the target at an offset $t\ge1$ averages the membership in three nested goal regions,
\begin{equation}
o_{it}=\frac13\sum_{r\in\{\varepsilon,\,\varepsilon/2,\,\varepsilon/4\}}\mathbf 1\big\{s_{i+t}\in\mathcal B_r(g_i)\big\}.
\label{eq:nested_occupancy}
\end{equation}
Whether a target is observed depends on how the episode of step $i$ ends.
The target is observed while this episode continues within the window, and after a termination all later offsets of the window are observed with $o_{it}=0$.
Offsets after a truncation or beyond the window, by contrast, are unobserved.
Rather than predicting every offset, the head groups the offsets into $L=30$ bins with logarithmically spaced integer boundaries $0=b_0<b_1<\cdots<b_L=W$, and the bin target $y_{i\ell}$ is the mean of the observed $o_{it}$ in bin $\ell$.
Bins without an observation are masked.
With the predicted logit $z_{i\ell}$ and the mask $m_{i\ell}$ of valid examples and observed bins, the occupancy loss is a binary cross-entropy that weighs every observed bin equally, independently of its width,
\begin{equation}
\mathcal L_{\mathrm{occ}}=\frac{\sum_{i,\ell}m_{i\ell}\big[\operatorname{softplus}(z_{i\ell})-y_{i\ell}\,z_{i\ell}\big]}{\max\big(1,\sum_{i,\ell}m_{i\ell}\big)}.
\label{eq:occupancy_batch}
\end{equation}
To obtain the occupancy value of Equation~\ref{eq:occupancy_value}, we hold the prediction constant within each bin and leave out the first bin, which only contains the offset zero,
\begin{equation}
Q_{\mathrm{occ}}=\sum_{\ell=1}^{L-1}\sigma(z_\ell)\sum_{t=b_\ell}^{b_{\ell+1}-1}\gamma^t,
\label{eq:occupancy_bins_value}
\end{equation}
where $\sigma$ is the logistic function.

\subsection{Occupancy weight and policy update}
\label{app:actor_details}
In the policy objective, the occupancy value is scaled by the weight $\beta$, the fraction of valid examples in the batch whose episode terminates within the window at or after their step,
\begin{equation}
\beta=\frac{\sum_iv_ie_i}{\max\big(1,\sum_iv_i\big)},
\label{eq:occupancy_weight}
\end{equation}
where $e_i$ indicates such a termination.
This weight is computed without gradient and held fixed during the policy update.
It only scales $Q_{\mathrm{occ}}$ in $J(\pi)$, whereas the occupancy head itself is trained in every update.
The policy is a tanh-squashed diagonal Gaussian over all components of a chunk, and its entropy coefficient is tuned toward a target entropy of $-0.5$ per action dimension of the chunk \citep{haarnoja2018soft}.
When the policy is updated, the gradient with respect to the sampled chunk passes through both encoders and both heads of the fixed critic.

\subsection{Shaping costs}
\label{app:shaping}
Shaping replaces the plain step count by a clock that also accumulates costs.
For a trajectory with per-step costs $c_t\ge0$, the shaped clock starts at $C_0=0$ and advances as $C_{t+1}=C_t+1+\lambda c_t$, and the shaped time $\widetilde\tau$ from step $i$ to a goal reached at step $j$ is $C_j-C_i$, rounded to the nearest integer.
A goal then counts as reached only if its rounded shaped time is at most $W$.
With $\lambda>0$, an example without such an arrival is censored at the rounded shaped time until the end of its window, clipped to $W$, while the separation margin remains $\kappa W$.
Occupancy observations are likewise assigned to bins by their rounded shaped time, and those beyond the last boundary are clipped into the last bin.
They are neither interpolated nor weighted by the duration of their cost increment.
The costs can take two forms.
They can be the mean squared change between consecutive stored actions, which is zero for the first action of an episode, or the nonnegative weighted sum of the shaping penalties that the robot environment reports.
Shaped time is implemented in the relabeling of goal sequences, which the robot tasks use, and with $\lambda=0$ all labels, including the censoring at $W$, are exactly the unshaped labels (see Appendix~\ref{app:labels}).

\section{Architecture and training}
\label{app:implementation}

\subsection{Networks}
\label{app:architecture}
All networks follow the residual architecture of scaled \gls{crl} and \gls{srl} \citep{wang2025scaling,nguimatsia2026survival}, in which each residual block contains four dense layers with layer normalization and Swish activations.
As in \citet{wang2025scaling}, the reported depth counts the dense layers in the residual blocks of one network.
Because a residual block needs four layers, networks with one or two layers are plain stacks of such layers instead.

The critic first embeds its inputs with two encoders.
The state--action encoder $\phi_\theta$ receives the state and the flattened action chunk, whereas the goal encoder $\psi_\theta$ receives the goal or the flattened goal sequence.
Both encoders have the reported depth and end in a linear projection to $64$ dimensions without layer normalization.
Their embeddings then enter the prediction network, which is the hazard network of \gls{srl} \citep{nguimatsia2026survival}.
This network modulates the state--action embedding with the goal embedding, processes both with a residual network of the reported depth, and outputs the instant probability and the hazards through a learned temporal basis of rank $64$.
For the occupancy head, we add a linear layer with $30$ outputs to this network, so that both heads share all parameters except their output layers.
The actor, finally, is a separate residual network of the reported depth that maps the state and the point goal to a Gaussian over action chunks.

\subsection{Training configuration}
\label{app:training}
Table~\ref{tab:configuration} lists all training settings of \ours{} next to those of the reference implementation of \gls{srl}.
We set all loss weights to one and tune no setting per task.

\begin{table}[ht]
\centering\small
\caption{On the seven benchmarks, \ours{} differs from the reference implementation of \gls{srl} only in the settings of the upper block.}
\label{tab:configuration}
\setlength{\tabcolsep}{5pt}
\begin{tabular}{@{}lll@{}}
\toprule
Setting & \gls{srl} & \ours{} \\
\midrule
Critic loss & $\mathcal L_{\mathrm{hazard}}$ & $\mathcal L_{\mathrm{hazard}}+\mathcal L_{\mathrm{occ}}+\mathcal L_{\mathrm{time}}+\mathcal L_{\mathrm{sep}}$ \\
Value used by the policy & $Q_{\mathrm{hazard}}$ & $Q_{\mathrm{hazard}}+\beta\,Q_{\mathrm{occ}}$ \\
Action chunk & One action & Two actions, open-loop \\
Future-goal sampling discount & $0.999$ & $0.99$ \\
Embedding dimension & $128$, layer normalization & $64$, no normalization \\
Encoder depth & Half the reported depth & The reported depth \\
Occupancy bins and radii & None & $30$ bins, radii $\varepsilon$, $\varepsilon/2$, $\varepsilon/4$ \\
\midrule
Budget and episode length & \multicolumn{2}{l}{$100$M environment steps, episodes of $1000$ steps} \\
Seeds and depths & \multicolumn{2}{l}{$1000$ to $1003$, depths $1$, $2$, $4$, $8$, $16$, $32$, and $64$} \\
Parallel environments and unroll length & \multicolumn{2}{l}{$512$ environments, $62$ steps} \\
Minibatch updates per training step & \multicolumn{2}{l}{Up to $800$} \\
Replay size per environment & \multicolumn{2}{l}{$1000$ to $10000$ transitions} \\
Learning rates & \multicolumn{2}{l}{$3\times10^{-4}$ for actor, critic, and entropy coefficient} \\
Hidden width & \multicolumn{2}{l}{$256$} \\
Discount and observation window & \multicolumn{2}{l}{$\gamma=0.999$ and $W=1000$ steps} \\
Hazard bins & \multicolumn{2}{l}{$1000$ bins of one step} \\
Goal radius & \multicolumn{2}{l}{$\varepsilon=0.5$} \\
Goal mixture & \multicolumn{2}{l}{$0.85$ future, $0.05$ current, $0.10$ random} \\
Humanoid goals and batch size & \multicolumn{2}{l}{Sequences of $k=32$ goals, $1024$ examples} \\
Ant goals and batch size & \multicolumn{2}{l}{Single goals, $512$ examples} \\
\bottomrule
\end{tabular}
\end{table}

\subsection{Reference methods}
\label{app:reference_methods}
All baselines run in our training pipeline with the reference configuration of their papers.
Within this pipeline, all methods perform up to $800$ updates for every $512\times62$ environment steps on minibatches of $512$ examples, or $1024$ for the survival critics on the humanoid tasks.
Both \gls{crl} and \gls{accrl} use the forward InfoNCE objective with a log-partition penalty of $0.1$, the discount $0.99$, and embeddings with $64$ dimensions, and \gls{accrl} additionally executes chunks of three actions.
For \gls{srl}, we use the configuration of Table~\ref{tab:configuration}.
Since every baseline keeps the configuration of its paper, the comparisons with the complete baselines also include differences in architecture and goal sampling.
The encoders of \gls{srl}, for example, have only half the reported depth.
The advantage of a four-layer \ours{} critic over baselines with $64$ layers, however, exceeds this factor by far.

We simply chose a budget of $100$ million steps to keep the compute time of all experiments manageable.
It is smaller than the $200$ to $400$ million steps that \citet{wang2025scaling} use for Ant Hardest Maze and the humanoid mazes.
This smaller budget explains the low absolute \ToG{} of all methods on the humanoid mazes, and with longer training all methods, including \ours{}, would likely improve there.
Since \ours{} needs fewer layers, it also reaches a given performance at a lower cost.
A four-layer run of \ours{} in the ant mazes, for example, takes $1.8$ hours on GH200 GPUs, against $8.1$ to $9.4$ hours for the baselines with $64$ layers on the same GPUs.

\section{Evaluation protocol}
\label{app:provenance}
We train all four methods on the seven tasks at depths from $1$ to $64$ with the seeds $1000$ to $1003$, and each run lasts $100$ million environment steps.
Every million steps, we evaluate the policy on $100$ episodes of $1000$ steps.

We summarize each run by its normalized AUC and its final value.
The normalized AUC is the mean of evaluations $1$ to $100$ and equals the area under the piecewise-constant learning curve divided by the training budget, whereas the final value is the mean of evaluations $96$ to $100$.
The learning speed in the last panel of Figure~\ref{fig:depth_auc} instead compares \ours{} with the baselines.
It is the number of environment steps after which the mean \ToG{} of \ours{} over four seeds, smoothed over three evaluations, first reaches the mean final \ToG{} of the strongest baseline at the same depth.
Note that the highest possible \ToG{} lies below $1000$, since the agent first has to travel to the goal.
In Ant U4-Maze, for example, where the final policies of \ours{} need a median of about $130$ steps to reach the goal, it lies at about $870$.

We compute each summary per seed and report the mean over the four seeds together with the standard deviation or the standard error over seeds.
Tables give means and sample standard deviations, and plots give means and standard errors.

\section{Benchmark results}
\label{app:full_results}
Figures~\ref{fig:depth_auc} and~\ref{fig:depth_final} plot the normalized AUC and the final \ToG{} against network depth (see Appendix~\ref{app:provenance}).
Once the networks have eight or more layers, \ours{} has the highest normalized AUC on every task.
Its mean final \ToG{} is also the highest in every setting except Ant U5-Maze with $64$ layers, where \gls{srl} is slightly ahead with $768$ against $756$ steps.
Beyond these two metrics, Table~\ref{tab:reach} shows that, with eight layers, \ours{} reaches the goal in almost every episode of the four ant mazes.

\begin{figure}[!htbp]
\centering
\includegraphics[width=\linewidth]{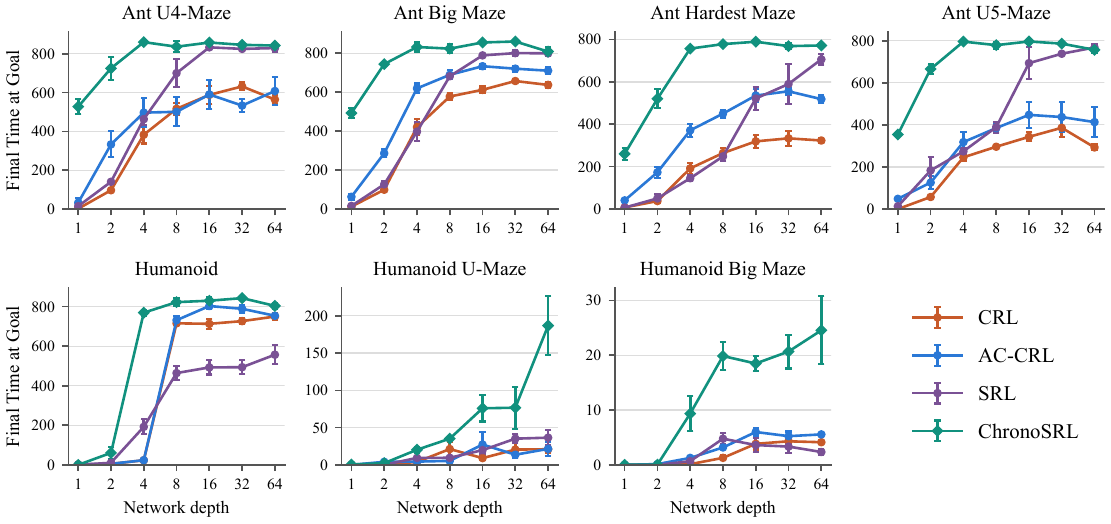}
\caption{Points show the mean final \ToG{} over four seeds with one standard error.}
\label{fig:depth_final}
\end{figure}
\begin{table}[!htbp]
\centering\small
\caption{Each cell gives the percentage of evaluation episodes in which an eight-layer agent reaches the goal at least once, averaged over the last five evaluations and four seeds.}
\label{tab:reach}
\begin{tabular}{lcccc}
\toprule
Task & CRL & AC-CRL & SRL & \ours{} \\
\midrule
Ant U4-Maze & 67 & 66 & 85 & 99 \\
Ant Big Maze & 81 & 92 & 88 & 98 \\
Ant Hardest Maze & 41 & 70 & 34 & 99 \\
Ant U5-Maze & 39 & 56 & 50 & 99 \\
Humanoid & 93 & 94 & 94 & 94 \\
\bottomrule
\end{tabular}
\end{table}
\clearpage

\subsection{Learning curves at every depth}
\label{app:learning_curves}
Figure~\ref{fig:learning_grid} complements these summaries with the learning curves of all methods at every depth.
\begin{figure}[!htbp]
\centering
\includegraphics[width=\linewidth]{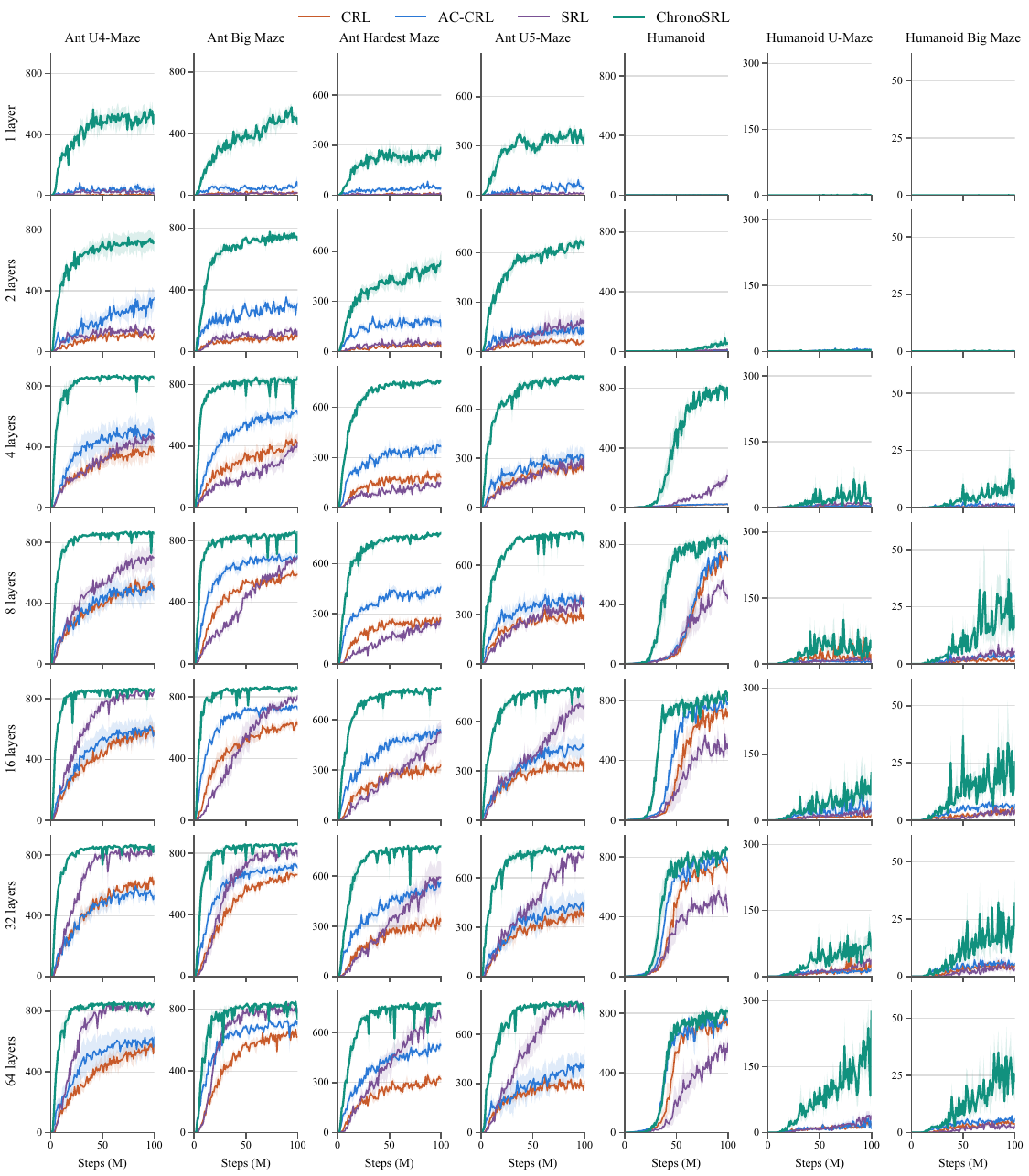}
\caption{Rows show the network depth and columns the task, and each line shows the mean \ToG{} over four seeds with one standard error.}
\label{fig:learning_grid}
\end{figure}

\clearpage

\section{Component ablations}
\label{app:ablation_protocol}
To see what individual components contribute, we ablate \ours{} on Humanoid with $16$ layers and on Ant Hardest Maze and Ant U4-Maze with $8$ layers.
Each ablation removes one component and keeps all other settings and the seeds of \ours{}.
The matched base is \ours{} without the time and separation losses, again with all other settings unchanged.
Rather than removing the time loss, the variant with permuted goal-reaching times shuffles its targets among the reached examples of each batch, which keeps their distribution but removes their relation to the examples.
The variant without the occupancy head drops it from both the critic loss and the value of the policy.
The distance actor, in contrast, keeps all critic losses and maximizes $-\dist(s,a,g)/(1-\gamma)$ with the same goal query.
Finally, the one-step variant executes single actions but still counts its labels in environment steps.
As for their effects, removing the occupancy head lowers the final \ToG{} on Humanoid from $830$ to $509$, because the humanoid falls after reaching its goal.
Single actions, on the other hand, slow down learning, so that \ours{} reaches $80\%$ of its final \ToG{} only after $56$ instead of $18$ million steps on Ant Hardest Maze and after $29$ instead of $9$ million steps on Ant U4-Maze.

\subsection{Geometric losses in another critic}
To test whether the geometric losses also help another critic, we add them with weight one to \gls{srl} in its complete reference configuration and train it on the seeds of our \gls{srl} baseline.
In particular, this configuration keeps the normalized embeddings, half-depth encoders, goal sampling, single actions, and value of \gls{srl}.
Figure~\ref{fig:srlgeo} shows that the losses speed up \gls{srl} in all four ant mazes with $8$ layers, and in each of these mazes every seed with the losses has a higher AUC than every seed without them.
On Ant Hardest Maze, the losses more than double the normalized AUC of \gls{srl}, from $143$ to $335$, and in the other three mazes \gls{srl} with the losses even reaches the final \ToG{} of \ours{}.
On Humanoid with $16$ layers, however, three of the four seeds of \gls{srl} with the losses do not learn, which we attribute to a conflict between the losses and the normalized embeddings of \gls{srl}.
Still, the survival critic of \ours{} alone, without the geometric losses, already improves on \gls{srl} more than the losses do on each of these tasks.
On Ant Hardest Maze, for example, this matched base reaches a normalized AUC of $647$.
\begin{figure}[ht]
\centering
\includegraphics[width=\linewidth]{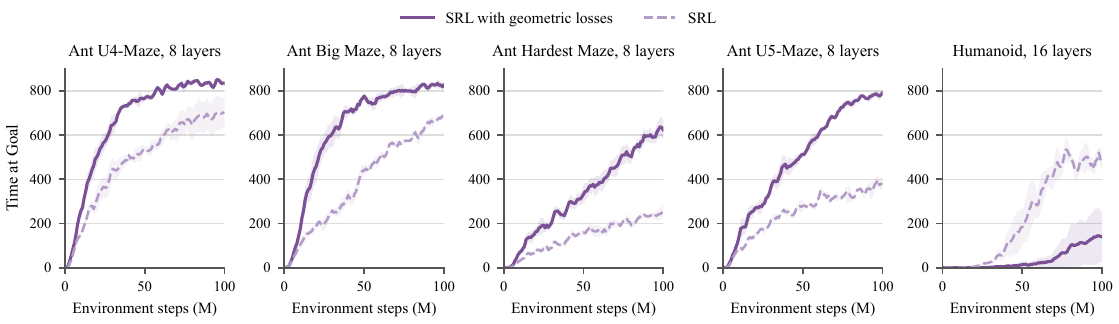}
\caption{Added to \gls{srl} in its reference configuration, the geometric losses speed up learning in all four ant mazes, but three of the four seeds do not learn on Humanoid.
Lines show the smoothed mean \ToG{} over four seeds with one standard error.}
\label{fig:srlgeo}
\end{figure}

\subsection{Results of the ablations}
Table~\ref{tab:ablation_auc} gives the normalized AUC of all ablations alongside that of \ours{}.
Without either geometric loss, the AUC drops on Humanoid and Ant Hardest Maze, whereas on Ant U4-Maze both variants learn equally fast.
Without the occupancy head, the humanoid still reaches the goal in $98\%$ of the evaluation episodes, yet it loses $426$ of the $1000$ steps after a fall, against $106$ for \ours{}.
A policy that follows the distance, by contrast, hardly learns on any task.
Finally, single actions lower the AUC on every task, most of all on Ant Hardest Maze.
\begin{table}[ht]
\centering\small
\caption{Following the distance fails on every task, while single actions slow down learning everywhere.
On Humanoid, every ablation lowers the AUC, and removing the occupancy head nearly halves it.
Entries show the normalized AUC as mean $\pm$ sample standard deviation over four seeds.}
\label{tab:ablation_auc}
\begin{tabular}{lccc}
\toprule
Variant & Humanoid, 16 layers & Ant Hardest Maze, 8 layers & Ant U4-Maze, 8 layers \\
\midrule
\ours{} & 541 $\pm$ 14 & 682 $\pm$ 16 & 801 $\pm$ 18 \\
Neither geometric loss & 508 $\pm$ 39 & 647 $\pm$ 42 & 800 $\pm$ 11 \\
Permuted times & 505 $\pm$ 45 & 674 $\pm$ 11 & 800 $\pm$ 21 \\
No occupancy head & 286 $\pm$ 22 & 681 $\pm$ 19 & 801 $\pm$ 6 \\
One-step actions & 475 $\pm$ 51 & 486 $\pm$ 33 & 710 $\pm$ 14 \\
Distance as value & 17 $\pm$ 7 & 70 $\pm$ 17 & 70 $\pm$ 15 \\
\bottomrule
\end{tabular}
\end{table}

\section{Measuring temporal geometry}
\label{app:measurements}
Here, we describe how we measure the temporal geometry analyzed in Section~\ref{sec:geometry}.
We run the final deterministic policies of all four seeds and record the distance to the goal at every decision step before the first arrival.
After converting this distance to steps by dividing it by $\kappa$, we compare it with the steps that remain until the goal is first reached.
Throughout, all correlations are Spearman rank correlations, and values with $\pm$ are means and standard errors over four seeds.

\subsection{Distance maps}
To build the distance maps, we place the ant at the center of every free cell of the maze, with the reset noise of the environment and a uniform offset of up to half a meter.
In Ant Hardest Maze, the goal is the center of the far corner, and we run four episodes of $1000$ steps from every cell.
Ant U4-Maze instead has its goal at the center of the dead end, and there we run eight episodes of $500$ steps from every cell.
Each map then shows the median over all four seeds within squares of half a meter, taken along the episodes that reach the goal.

We measure how closely the distance follows time with rank correlations, for which each variant is compared with the remaining steps of its own final policy.
For the map in Figure~\ref{fig:geometry}, the rank correlation between distance and remaining steps is $0.93$ for \ours{}, whereas it is negative without the geometric losses.
To separate time from space in Ant U4-Maze, we further compute per seed the partial rank correlation of the distance with the remaining steps after removing the ranks of the straight-line distance to the goal, as well as the converse.
This shows that the distance of \ours{} follows the remaining steps, with a partial rank correlation of $0.62$, and hardly depends on the straight-line distance.
Without the geometric losses, by contrast, the distance depends on both about equally.

\subsection{Roles of the two geometric losses}
To separate the roles of the two geometric losses, we repeat this analysis for all variants on Ant Hardest Maze with four layers and on Ant U4-Maze with eight layers, and Figure~\ref{fig:geometry_components} shows the resulting maps.
The time loss is what orders the distances by time, since without it the distances in Ant Hardest Maze are unrelated to the remaining steps.
The separation loss, in turn, matters only where time and space differ.
In Ant Hardest Maze, the remaining steps and the straight-line distances are strongly related, whereas in Ant U4-Maze, where they differ, the distance without the separation loss follows the straight-line distance instead of the remaining steps.
Only the two losses together thus make the distance follow the time of the agent.
In addition, the separation loss stretches the scale, so that the distance of \ours{} overestimates the remaining steps several-fold in both mazes.

\begin{figure}[ht]
\centering
\includegraphics[width=\linewidth]{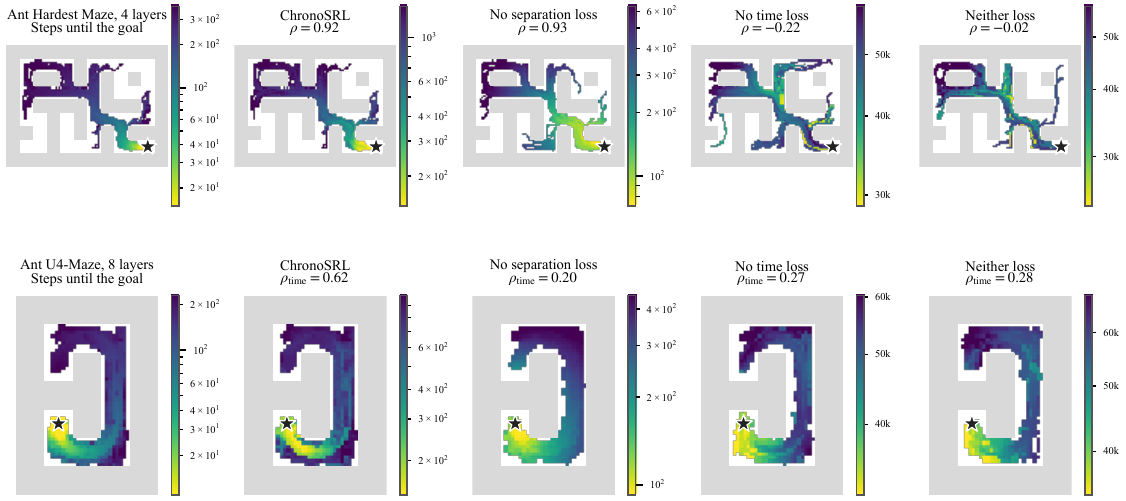}
\caption{Both geometric losses are needed for the distance to follow the time of the agent.
The maps show the steps until the goal for the final policies of \ours{} and the distance of each variant converted to steps, pooled over four seeds and each on its own color scale.
The titles give the mean rank correlation with the remaining steps, which in Ant U4-Maze is a partial correlation that accounts for the straight-line distance.}
\label{fig:geometry_components}
\end{figure}

\subsection{Hazard value along evaluation episodes}
Besides the maps, we also examine the critic along evaluation episodes.
We run $64$ evaluation episodes of \ours{} and of its variant without the geometric losses from the standard start of Ant U4-Maze and Ant Hardest Maze, and at every decision step we query the critic with the chunk that the policy plans there.
Figure~\ref{fig:clock} shows that along these episodes, the distance of \ours{} shrinks from about one discount horizon at the start to about $150$ steps, whereas without the geometric losses it stays at tens of thousands of steps.

The same queries also let us check the hazard value $Q_{\mathrm{hazard}}$ of the planned chunk, which predicts a discounted waiting time.
We compare this prediction with the waiting time that the agent actually experiences, $(1-\gamma^{T-t})/(1-\gamma)$ for a first arrival at step $T$.
In both ant mazes, the rank correlation between the two is $0.99$.
The prediction is nevertheless longer by a median of $11$ to $16\%$, since the value is trained on the stochastic behavior in the replay rather than on the deterministic evaluation policy.
\begin{figure}[ht]
\centering
\includegraphics[width=\linewidth]{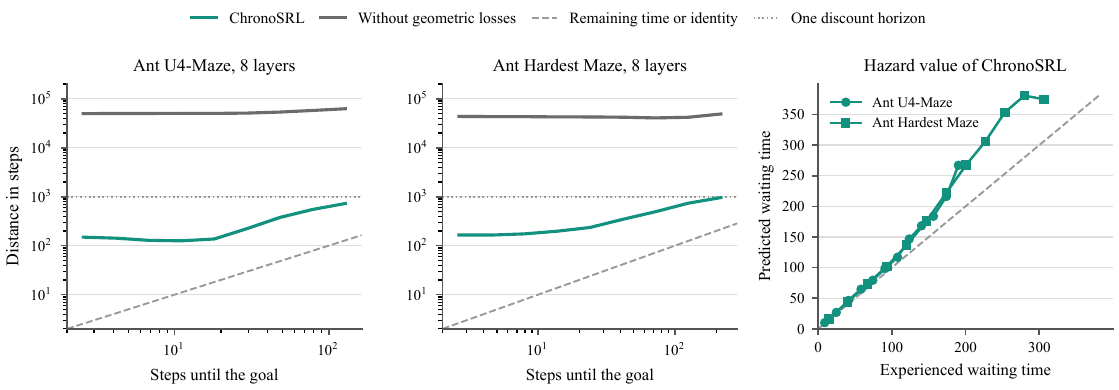}
\caption{Along evaluation episodes, the distance of \ours{} shrinks with the steps until the goal, and its hazard value tracks the discounted waiting time that the agent experiences.
The left panels show the distance in steps as the geometric mean over four seeds of the per-run medians within logarithmically spaced bins of the remaining steps, with one standard error.
The right panel shows the median predicted waiting time within bins of the experienced one, with the identity as a dashed line.}
\label{fig:clock}
\end{figure}

\section{Quadruped robot locomotion tasks}
\label{app:robots}

\subsection{Setup and tasks}
All robot experiments use the Unitree Go2 in the locomotion environment of RL-X \citep{bohlinger2024onepolicy}, which we simulate with MuJoCo-Warp in $4096$ parallel environments at $50$ Hz with episodes of $1000$ steps.
We keep the domain randomization, observation noise, shaping terms, and per-environment curriculum of this environment, and the curriculum coefficient $c$ scales both the randomization and the shaping.
The same coefficient enters the termination rule, as a training episode ends when the trunk drops below $(1-c)$ times $60\%$ of its standing height.
The actor observes the proprioception of RL-X, the phase of a gait clock, and the goal in the robot's frame from the true pose, but it receives no height map.
The actions of all methods are added to a trot-shaped prior on the joint targets with a base amplitude of $0.3$ and a frequency of $3$ Hz, and this prior stays on during evaluation.
Every run trains for $100$ epochs of $5$ million steps with batches of $8192$ examples.
Evaluation instead runs in $4096$ separate parallel environments at curriculum level zero, where an episode ends only when the robot falls.

The three tasks differ only in their goal definition.
In velocity tracking, the goal is the RL-X command for the forward, lateral, and yaw velocity of the base.
Each component of this command is sampled uniformly from $[-1,1]$ and set to zero below $0.1$, and about $8\%$ of the commands are standing commands.
A goal counts as reached when the norm of the velocity error is below $0.25$.
We report the time at the goal, the tracking error per step, and the episode length on a fixed quarter of the evaluation environments, which are commanded to walk forward at $1$ meter per second.

The other two tasks give the robot a goal pose instead.
In goal-position reaching, this pose lies $1$ to $5$ meters away in any direction, and its heading points from the start toward the goal.
We encode it as the position together with $0.3$ times the cosine and sine of the heading, and the goal counts as reached within a distance of $0.5$ in this encoding.
Box climbing adds a box centered on the goal, whose height is $36$ centimeters times the curriculum coefficient.
For goal-position reaching we report the time at the goal, whereas for box climbing we report the box height of the training curriculum, since evaluation runs at curriculum level zero.
The curriculum of each environment rises after an episode in which the trunk stood at least $20$ centimeters above the box top, within its footprint, for $50$ consecutive steps, and it falls after any other episode, with a step that grows with the number of consecutive successes or failures.

\subsection{Critics and shaping costs}
\ours{} uses its benchmark configuration and \gls{srl} its reference configuration, both with goal sequences of $k=8$.
The contrastive baselines \gls{crl} and \gls{accrl} use a discount of $0.99$, an embedding dimension of $64$, and future goals only, with chunks of three actions for \gls{accrl}.
The two survival critics of \ours{} and \gls{srl} receive the shaping terms of RL-X as the per-step cost $c_t$ of the shaped time in Section~\ref{sec:shaping}, with $\lambda=10$.
Concretely, this cost is the negated sum of the $23$ penalty terms of the RL-X reward with their weights and curriculum scaling, clipped to $[0,50]$.
In goal-position reaching and box climbing, the curriculum scale of this cost is at least $0.3$.

Goal-position reaching adds one more term, a travel cost of $0.15\,(1-\cos\vartheta)\min(v/0.2,1)$ per step that both survival critics pay and that the curriculum does not scale.
Here, $\vartheta$ is the angle between the horizontal velocity of the trunk and its heading, and $v$ is the speed in meters per second.
Without this cost, \ours{} reaches its goals by walking backward.
With it, however, the box-climbing policies flip over at the edge of the box from $4$ centimeters on, at every depth we tried.
Box climbing therefore runs without the travel cost, and its policies climb the box mostly backward.

As a reference, we also train reward-based PPO \citep{schulman2017proximal} on velocity tracking with the default hyperparameters of RL-X and an entropy coefficient of $0.002$, for about $1$ billion steps with four seeds.
PPO uses the same environment, randomization, and curriculum, but it trains on the RL-X reward without the trot prior and resamples commands within episodes.
Its final policies track the forward command with an error of $0.07$ meters per second, and after $500$ million steps, the budget of our runs, the error is already $0.085$.

\subsection{Results}
Figure~\ref{fig:robots_final} adds the episode length in velocity tracking to the final values of Figure~\ref{fig:robotics}.
It shows that from four layers on, \ours{} tracks the forward command most accurately and stays at velocity and position goals longest.
As a reference for the episode length, a robot that sends no actions to the gait prior falls after $54$ steps, but the velocity-tracking episodes of \gls{crl} and \gls{accrl} end even earlier, after $32$ to $41$ steps.
\begin{figure}[ht]
\centering
\IfFileExists{figures/chronosrl/robot_summary_final.pdf}{\includegraphics[width=\linewidth]{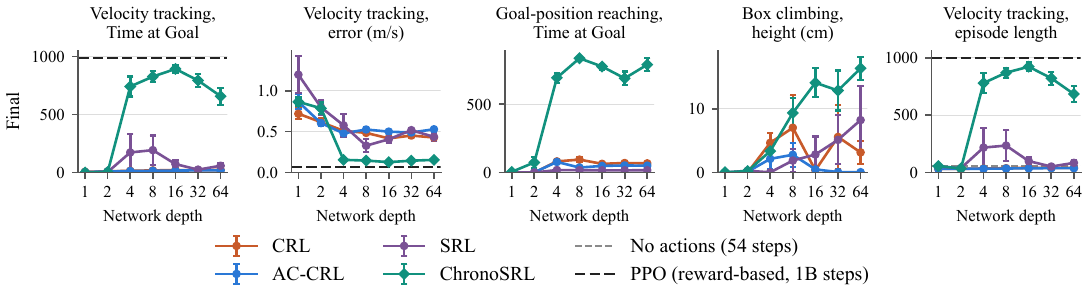}}{\fbox{Final robot values pending}}
\caption{At the end of training and from four layers on, \ours{} stays at velocity and position goals longest and tracks the forward command most accurately, while the contrastive baselines end their velocity-tracking episodes even before the dashed line, at which a robot that sends no actions falls.
Points show the mean of the last five evaluations over four seeds with one standard error.}
\label{fig:robots_final}
\end{figure}

The learning curves in Figure~\ref{fig:robot_curves} show that the box curricula of \gls{crl} and \gls{accrl} rise fastest, with single runs reaching boxes of up to $25$ centimeters early in training.
From four layers on, however, these curricula later fall back at every depth.
The curriculum of \ours{} rises more slowly, but in $16$ of its $20$ runs with four or more layers it is still rising at the end of training.
\begin{figure}[ht]
\centering
\IfFileExists{figures/chronosrl/robot_curves.pdf}{\includegraphics[width=\linewidth]{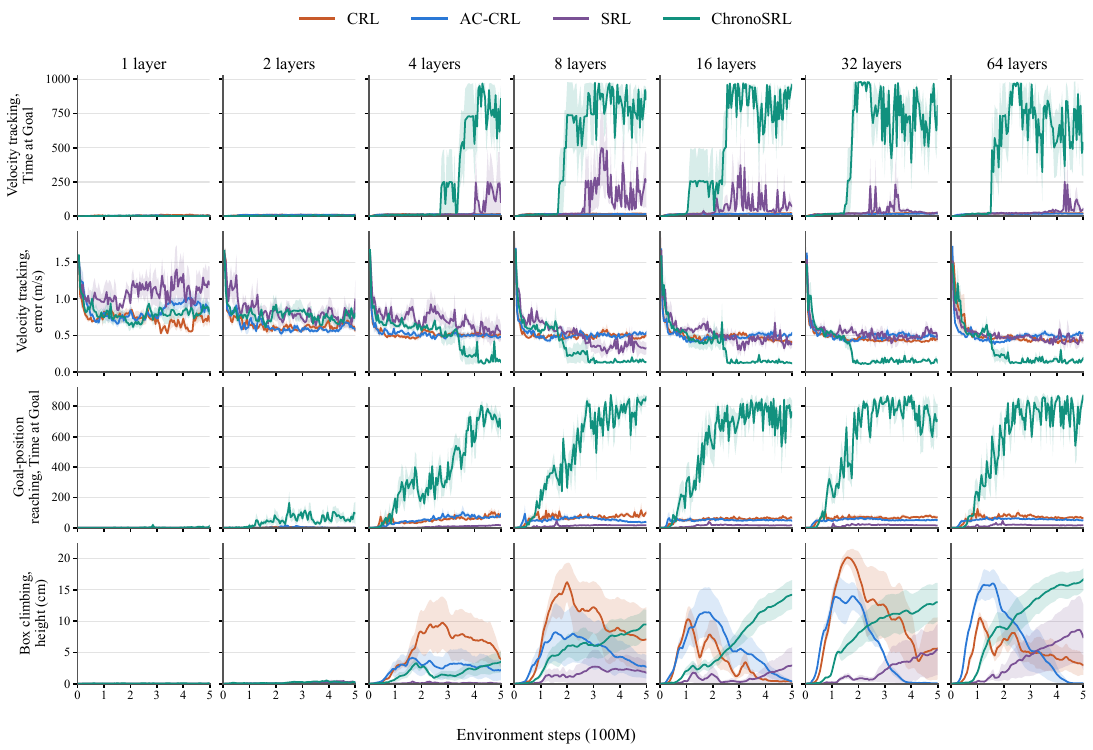}}{\fbox{Robot learning curves pending}}
\caption{From four layers on, \ours{} learns to stay at velocity and position goals, and its box curriculum keeps rising until the end of training, while those of the contrastive baselines peak early and fall back.
Rows show the four metrics of Figure~\ref{fig:robotics} and columns the network depth, and each curve shows the mean over four seeds with one standard error over $500$ million steps.}
\label{fig:robot_curves}
\end{figure}

\section{When hindsight goals are already satisfied}
\label{app:boundary}
\begin{figure}[ht]
\centering
\includegraphics[width=\linewidth]{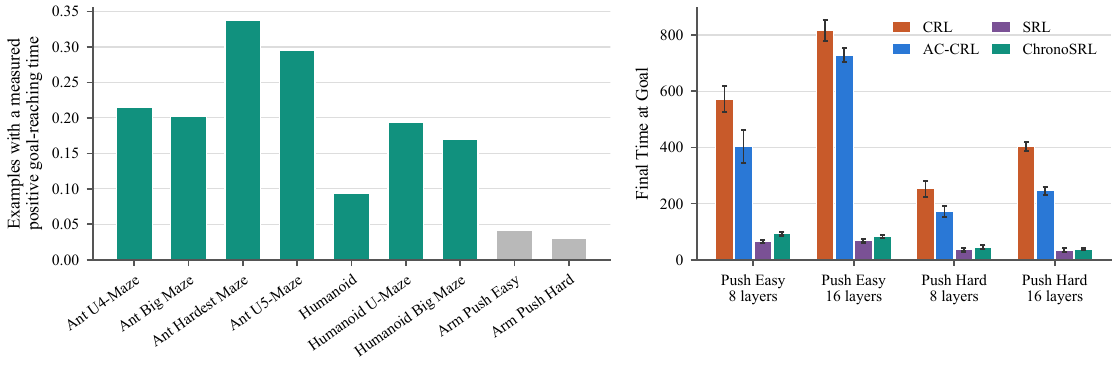}
\caption{Temporal supervision needs trajectories that make progress toward their relabeled goals.
The left panel shows the fraction of relabeled examples of \ours{} with a positive goal-reaching time in the final training batches, averaged over all runs of a task, and the right panel shows the final \ToG{} on the two arm tasks.
\gls{srl} uses its published arm settings, namely a dwell of $k=20$ on Arm Push Easy and a discount of $0.99$ on Arm Push Hard.
Bars show the mean over four seeds with one standard error.}
\label{fig:arms}
\end{figure}
In the arm manipulation tasks of JaxGCRL, the goal is the position of an object, and this object rarely moves before the arm has learned to push it.
As a result, most relabeled goals are already reached at the sampled state.
Indeed, the left panel of Figure~\ref{fig:arms} shows that only $3$ to $4\%$ of the relabeled examples have a positive goal-reaching time, compared to $9$ to $34\%$ in the locomotion and navigation tasks.
With so few positive goal-reaching times, the time loss and the hazard likelihood receive few informative targets.
Consistent with this, the right panel of Figure~\ref{fig:arms} shows that the survival critics of \ours{} and \gls{srl} stay below a \ToG{} of $100$, whereas the contrastive critics of \gls{crl} and \gls{accrl} reach several hundred.
Goals that also contain the position of the gripper, rather than that of the object alone, could make progress visible long before the object moves.

\end{document}